\title{ Improved Robustness and Hyperparameter Selection in the Dense Associative Memory }
\author{ Hayden McAlister$^{1}$, Anthony Robins$^{1}$, Lech Szymanski$^{1}$ }
\date{
  $^{1}$School of Computing, University of Otago, Dunedin, New Zealand.
}
\begin{document}
\maketitle
\pagebreak

\begin{abstract}
    The Dense Associative Memory generalizes the Hopfield network by allowing for sharper interaction functions. This increases the capacity of the network as an autoassociative memory as nearby learned attractors will not interfere with one another. However, the implementation of the network relies on applying large exponents to the dot product of memory vectors and probe vectors. If the dimension of the data is large the calculation can be very large and result in imprecisions and overflow when using floating point numbers in a practical implementation. We describe the computational issues in detail, modify the original network description to mitigate the problem, and show the modification will not alter the networks' dynamics during update or training. We also show our modification greatly improves hyperparameter selection for the Dense Associative Memory, removing dependence on the interaction vertex and resulting in an optimal region of hyperparameters that does not significantly change with the interaction vertex as it does in the original network.
\end{abstract} 

\section{Introduction}

Autoassociative memories are a class of neural networks that learn to remember states, typically also allowing nearby states to iterate towards similar learned states. These networks act as memories for the learned states, reconstructing lost information and correcting errors in probe states. The Hopfield network \citep{Hopfield1982, Hopfield1984} is perhaps the most studied model in the class. However, as with all autoassociative memories, the Hopfield network suffers from capacity issues --- the number of states that can be stored in a network without error is limited. In the Hopfield network with Hebbian learning, this has been shown to be roughly \(0.14N\) for a network of dimension \(N\) \citep{McEliece1987, Hertz1991}. The Dense Associative Memory, also known as the modern Hopfield network, generalizes the classical Hopfield network by introducing an interaction function parameterized by an interaction vertex \citep{KrotovHopfield2016, KrotovHopfield2018}. This function controls the range of the influence for learned states, allowing control of the sizes of the attractors and increasing the network capacity. \citet{KrotovHopfield2016} also introduce several other generalizations which are parameterized by additional hyperparameters relating to learning, including the initial learning rate, learning rate decay, momentum, learning temperature and the exponent on the error term. Additional hyperparameters were introduced such as the form of the interaction function, the number of memory vectors, and more. In effect, the Dense Associative Memory is a potentially more powerful autoassociative memory, but at the cost of increased complexity and reliance on hyperparameter tuning.

We focus on the implementation details of the Dense Associative Memory. In particular, we show the exact form given by Krotov and Hopfield suffers from issues relating to computation and numerical stability. The current form calculates the dot product between two vectors of length \(N\) then immediately applies a potentially large exponentiation based on the interaction function. This can cause inaccuracies in the floating point numbers used for computation, or even completely overflow them. In Section \ref{Section:ModificationFormalization} we show a modification to the original form --- a normalization and shifting of scaling factors --- that prevents the computational problems, and prove that the modifications do not change the network behavior for a specific class of interaction functions: homogenous functions. Fortunately, the typical interaction functions --- the polynomial interaction function (Equation \ref{Eqn:PolynomialInteractionFunction}) and rectified polynomial interaction function (Equation \ref{Eqn:RectifiedPolynomialInteractionFunction}) --- are in this class. We show our modifications do not alter the properties of the autoassociative memory, such as the capacity, but do appear to have emergent effects on the network over the course of training. In Section \ref{Section:HyperparameterTuning} we provide experimental results that show our modified network has a stable region of optimal hyperparameters across a wide range of interaction vertices. This is in comparison to the original network which had the optimal hyperparameters shift dramatically as the interaction vertex changed even for the same dataset. We also show that the optimal region of hyperparameters is no longer heavily dependent on the size of the data vectors, meaning applying the Dense Associative Memory to a new task will not require massively retuning the hyperparameter selections.

\section{Literature Review}

Our proposed method of shifting the scaling factors within the interaction function does not appear to have been proposed previously, and other implementations of the Dense Associative Memory do not seem to have included it. However, many implementations of the Dense Associative Memory use the feed-forward equivalence set forth by Krotov and Hopfield \citep{KrotovHopfield2016}. This equivalence allows the Dense Associative Memory to be expressed with some approximations as a feed-forward densely connected neural network with a single hidden layer. This architecture is much easier to implement using traditional deep learning libraries. The feed-forward equivalent model implicitly implements our proposed changes by selecting values of the scaling factor that negate terms arising from a Taylor expansion. This may explain why the feed-forward version of the model is more stable than the auto-associative version.

Normalization is a typical operation in neural networks. In autoassociative memories specifically, we may apply a normalization term to provide a constant power throughout network calculations, which ensures calculations are proportional only to the magnitudes of the learned weights rather than the magnitude of the probe vector. Even more specifically, in the Hopfield network this is typically achieved by using binary valued vectors. It has been shown networks using these vectors have the same behavior as networks using graded (continuous value) neurons \citep{Hopfield1984}. Normalization may also be applied in the learning rule, such as in the Hebbian learning rule \citep{Hebb1949}. Normalization in learning may be used to simply scale the weights into something more interpretable, as in the Hebbian, or to achieve a different behavior during training. For example, batch normalization aims to improve training by normalizing the inputs to a layer across a batch --- allowing the network to focus only on the variations in training data rather than the potentially overwhelming average signal \citep{Ioffe2015}. Layer normalization is a technique used in training recurrent neural networks and removes the dependence on batch size \citep{Ba2016}. These normalizations techniques are more complex than what we suggest. Our modifications are not aiming to supersede these techniques in the Dense Associative Memory but simply improve network stability  and practicality on an implementation level. Moreover, our suggestions do not exclude the possibility of using these other normalization techniques.

Networks related to the Dense Associative Memory have employed some normalization techniques in a similar manner to our work. Perhaps most closely related is the continuous, attention-like Hopfield network \citep{Ramsauer2021} which has shown promising results in the realm of transformer architectures. Ramsauer et al. normalize the similarity scores as we do but work over a slightly different domain: spherical vectors rather than bipolar vectors. While the vector magnitude is still constant, the network has changed rather significantly from the one introduced by Krotov and Hopfield which may slightly change the arguments we make below, although likely not considerably. However, we note that no analysis of the network stability in relation to floating point accuracy is made, and the remainder of our modifications are not applied (e.g. shifting scaling factors inside the interaction function), which our work expands on considerably. Further works have performed a similar normalization, showing there is a trend of applying this technique in network implementation --- albeit without noting why it is useful for network stability \citep{Millidge2022, Liang2022, AlonsoKrichmar2024}. Literature on Dense Associative Memory applications and derivatives discuss normalization either in a separate context or only tangentially. Extensive work has been done on contrastive normalization (a biologically plausible explanation of network behavior) in the Dense Associative Memory and its relation to the restricted Boltzmann machine \citep{KrotovHopfield2021}. Other works employ some of the more advanced normalization techniques, including some we discuss above such as layer normalization by treating the Dense Associative Memory as a deep recurrent network \citep{Seidl2021}. Again, these works do not consider shifting the scaling factors within the interaction function.

\section{Formalization of the Hopfield Network and Dense Associative Memory}

The Hopfield network defines a weight matrix based on the Hebbian of the learned states \(\xi\), indexed by \(\mu\):
\begin{align}
    \label{Eqn:HebbianLearningRule}
    W_{ji} & = \sum_{\mu} \xi^{\mu}_{j} \xi^{\mu}_{i}
\end{align}

The update dynamics for a probe state \(\xi\) are defined by the sign of the energy function, with updates being applied asynchronously across neurons:
\begin{align}
    \label{Eqn:ClassicalHopfieldUpdate}
    \begin{split}
        \xi_{i}^{(t+1)} &= \text{sign} \left( W \xi^{(t)} \right)_i \\
        &= \text{sign} \left( \sum_j W_{ji} \xi_j^{(t)} \right),
    \end{split}
\end{align}
where \(\text{sign}\) is the sign function, or hardlimiting activation function:
\begin{align}
    \label{Eqn:HardLimitingActivation}
    \text{sign}(x) =
    \begin{cases}
        1  & \text{if } x\geq0, \\
        -1 & \text{if } x<0.
    \end{cases}
\end{align}

The Dense Associative Memory has significantly different learning rules and update dynamics compared to the Hopfield network, as well as major architectural changes, such as using a set of memory vectors \(\zeta\) instead of a weight matrix \(W\). The Dense Associative Memory also does away with a simple energy function and instead uses the sign of the difference of energies:
\begin{align}
    \label{Eqn:ModernHopfieldUpdateOriginal}
    \xi_i^{(t+1)} & = \text{sign} \left[ \sum_{\mu} \left( F_n ( \zeta_{i}^{\mu} + \sum_{j \neq i} \zeta_{j}^{\mu} \xi_j^{(t)} ) - F_n ( -\zeta_{i}^{\mu} + \sum_{j \neq i} \zeta_{j}^{\mu} \xi_j^{(t)} ) \right) \right]
\end{align}
Where \(F_n\) is the interaction function, parameterized by interaction vertex \(n\). The interaction vertex controls how steep the interaction function is. Typical interaction functions are the polynomial
\begin{align}
    \label{Eqn:PolynomialInteractionFunction}
    F_n\left(x\right) & = x^n,
\end{align}
rectified polynomial
\begin{align}
    \label{Eqn:RectifiedPolynomialInteractionFunction}
    F_n\left(x\right) & =
    \begin{cases}
        x^n & \text{if } x\geq0, \\
        0   & \text{if } x<0,
    \end{cases}
\end{align}
or leaky rectified polynomial
\begin{align}
    \label{Eqn:LeakyRectifiedPolynomialInteractionFunction}
    F_n\left(x, \epsilon\right) & =
    \begin{cases}
        x^n         & \text{if } x\geq0, \\
        -\epsilon x & \text{if } x<0.
    \end{cases}
\end{align}
The Hopfield network behavior is recovered when using the polynomial interaction function in Equation \ref{Eqn:PolynomialInteractionFunction} and \(n=2\) \citep{KrotovHopfield2016, Demircigil2017}. Increasing the interaction vertex allows memory vectors to affect only very similar probe vectors, decreasing interference with other memory vectors.

The Hopfield network requires only the energy calculation of the current state for updates (Equation \ref{Eqn:ClassicalHopfieldUpdate}), while the Dense Associative Memory requires the calculation of the energy for the current state when neuron \(i\) is clamped on (value \(1\)) and clamped off (value \(-1\)). This is more computationally expensive but allows for updating when the interaction vertex is larger than \(2\) and the usual arguments for update convergence in the Hopfield network fail \citep{Hopfield1982, Hopfield1985}.

Instead of a weight matrix, the Dense Associative Memory uses a set of memory vectors, clamped to have values between \(-1\) and \(1\), but not necessarily corresponding to the learned states. Instead, the learned states are used to update the memory vectors in a gradient descent. The loss function used in the gradient descent is based on the update rule in Equation \ref{Eqn:ModernHopfieldUpdateOriginal}:
\begin{equation}
    \begin{gathered}
        \label{Eqn:ModernHopfieldLearningOriginal}
        L = \sum_{a} \sum_{i} \left( \xi_{a, i} - C_{a, i} \right)^{2m} \\
        C_{a, i} = \tanh \left[ \beta \sum_{\mu} \left( F_n ( \zeta_{i}^{\mu} + \sum_{j \neq i} \zeta_{j}^{\mu} \xi_{a, j}^{(t)} ) - F_n ( -\zeta_{i}^{\mu} + \sum_{j \neq i} \zeta_{j}^{\mu} \xi_{a, j}^{(t)} ) \right) \right]
    \end{gathered}
\end{equation}
Where \(a\) indexes over the learned states, and \(i\) indexes over the neurons. The predicted value of neuron \(i\) in state \(\xi_a\), \(C_{a, i}\), is bounded between \(-1\) and \(1\) by \(\tanh\) . Taking \((\xi_{a, i} - C_{a, i})\) gives an error term we can differentiate to obtain a gradient to optimize with. The new parameters \(m\) and \(\beta\) control the learning process. The error exponent \(m\) emphasizes of larger errors,  which can help training networks with larger interaction vertices \citep{KrotovHopfield2016, KrotovHopfield2018}. The inverse temperature \(\beta\) scales the argument inside the \(\tanh\) function, allowing us to avoid the vanishing gradients of \(\tanh\) as the argument grows largely positive or largely negative. Krotov and Hopfield suggest \(\beta = \frac{1}{T^n}\).

Updating by Equation \ref{Eqn:ModernHopfieldUpdateOriginal} and learning by Equation \ref{Eqn:ModernHopfieldLearningOriginal} has proven successful when all hyperparameters are tuned carefully. However, we note some issues when implementing the network according to these rules in practice, particularly relating to floating point precision. Inspecting the order of calculations in Equation \ref{Eqn:ModernHopfieldUpdateOriginal} and \ref{Eqn:ModernHopfieldLearningOriginal}: first the ``similarity score'' between a learned state and a memory vector is calculated \(\left( \pm \zeta_{i}^{\mu} + \sum_{j \neq i} \zeta_{j}^{\mu} \xi_{j}^{(t)} \right)\), effectively the dot product between two binary vectors of length equal to the network dimension \(N\). Next, this similarity score is passed into the interaction function, which will typically have a polynomial-like region such as in Equation \ref{Eqn:PolynomialInteractionFunction}, \ref{Eqn:RectifiedPolynomialInteractionFunction}, or \ref{Eqn:LeakyRectifiedPolynomialInteractionFunction}. If the interaction vertex \(n\) is large the memory vectors become prototypes of the learned states \citep{KrotovHopfield2016}, hence the similarity scores will approach the bound for the dot product of two binary vectors, \(N\). We may have to calculate a truly massive number as an intermediate value. For example, \(N=10^{4}\) and \(n=30\) will result in an intermediate value of \(10^{120}\). Single precision floating point numbers (``floats'') have a maximum value of around \(10^{38}\), while double precision floating point numbers (``doubles'') have a maximum value of around \(10^{308}\). In our example, we are already incapable of even storing the intermediate calculation in a float, and it would not require increasing the network dimension or interaction vertex considerably to break a double either. Furthermore, the precision of these data types decreases as we approach the limits, potentially leading to numerical instabilities during training or updating. Even in the update rule (Equation \ref{Eqn:ModernHopfieldUpdateOriginal}) where only the sign of the result is relevant, a floating point overflow renders the calculation unusable.

We propose a slight modification to the implementation of the Dense Associative Memory. Normalizing the similarity score by the network dimension \(N\) bounds the magnitude of the result to \(1\) rather than \(N\). Additionally, we propose pulling the scaling factor \(\beta\) inside the interaction function, so we can appropriately scale the value before any imprecision is introduced by a large exponentiation as well as controlling the gradient, making the network more robust. We show these modifications are equivalent to the original Dense Associative Memory specification in Section \ref{Section:ModificationFormalization}. In Section \ref{Section:HyperparameterTuning} we also show by experimentation that these modifications make the network temperature independent of the interaction vertex. This makes working with the Dense Associative Memory more practical, as it avoids large hyperparameter searches when slightly altering the interaction vertex.

\section{Modification and Consistency with Original}
\label{Section:ModificationFormalization}

Our modifications attempt to rectify the floating point issues by scaling the similarity scores \textit{before} applying the exponentiation of the interaction function. To justify our modifications we must show that the scaling has no effect on the properties of the Dense Associative Memory in both learning and updating. For the update rule, we will show the sign of the argument to the hardlimiting function in Equation \ref{Eqn:ModernHopfieldUpdateOriginal} is not affected as we introduce a scaling factor and move it within the interaction function. For learning, we will make a similar argument using Equation \ref{Eqn:ModernHopfieldLearningOriginal}.

\subsection{Homogeneity of the Interaction Function}

In parts of our proof on the modification of the Dense Associative Memory we require the interaction function to have a particular form. We require the sign of the difference of two functions remain constant even when a scaling factor is applied inside those functions; \(f(x)-f(y) = f(\alpha x) - f(\alpha y) \quad \forall a>0\). A stronger property (that is much easier to prove) is that of homogeneity:
\begin{align}
    f(\alpha x) = \alpha^k f(x) \qquad \forall \alpha > 0
\end{align}
with the exponent \(k\) known as the degree of homogeneity. Interaction functions that are not homogenous may still satisfy our modifications, but we find the proof easier with this stronger property.

\newtheorem{HomogeneityOfInteractionFunctions}{Theorem}[subsection]
\newtheorem{HomogeneityOfPolynomialInteractionFunction}[HomogeneityOfInteractionFunctions]{Lemma}
\begin{HomogeneityOfPolynomialInteractionFunction}
    \label{Lemma:HomogeneityOfPolynomialInteractionFunction}
    The polynomial interaction function (Equation \ref{Eqn:PolynomialInteractionFunction}) is homogenous.
\end{HomogeneityOfPolynomialInteractionFunction}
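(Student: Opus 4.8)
The plan is to verify the homogeneity definition directly by substituting the scaled argument $ax$ into the polynomial interaction function and appealing to the standard power-of-a-product rule. First I would write $F_n(ax) = (ax)^n$ using the definition in Equation \ref{Eqn:PolynomialInteractionFunction}. Next I would factor the exponent as $(ax)^n = a^n x^n$. Recognizing the trailing factor $x^n$ as $F_n(x)$ then yields $F_n(ax) = a^n F_n(x)$, which is exactly the homogeneity condition $f(ax) = a^k f(x)$ with degree of homogeneity $k = n$. The lemma follows immediately, and as a byproduct we record that the degree equals the interaction vertex itself.

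The only point requiring a moment's care is the range of validity of the factorization $(ax)^n = a^n x^n$. Because the definition only demands the property for $a > 0$, and the interaction vertex $n$ is a positive integer, the identity holds for every real base $x$ by repeated application of the commutativity and associativity of multiplication; in particular it remains valid when the similarity score $x$ is negative, so there are no sign or branch-cut subtleties of the kind that would arise for non-integer exponents. I therefore expect no genuine obstacle: the result is essentially a one-line computation. Its purpose is not depth but bookkeeping — it licenses the later consistency arguments for the update rule (Equation \ref{Eqn:ModernHopfieldUpdateOriginal}) and the learning rule (Equation \ref{Eqn:ModernHopfieldLearningOriginal}) to freely move the scaling factor inside and outside $F_n$, absorbing it into an overall positive multiplicative constant that does not affect the sign taken by the hardlimiting function nor the location of the optimum reached during gradient descent.
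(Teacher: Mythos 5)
Your proposal is correct and follows essentially the same route as the paper: substitute $\alpha x$ into $F_n$, factor $(\alpha x)^n = \alpha^n x^n$, and identify the result as $\alpha^n F_n(x)$ with degree of homogeneity $k = n$. Your additional remark that the factorization is unproblematic for negative $x$ because $n$ is a positive integer is a small but sound extra observation not made explicit in the paper.
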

\begin{proof}
    \begin{align}
        \begin{split}
            F_n\left(\alpha x\right) & = \left( \alpha x \right)^n \\
            & =  \alpha^n  x^n \\
            & =  \alpha^n F_n(x) 
        \end{split}
    \end{align}
    
    Hence, the polynomial interaction function is homogenous, with degree of homogeneity equal to the interaction vertex \(n\).
\end{proof}

\newtheorem{HomogeneityOfRectifiedPolynomialInteractionFunction}[HomogeneityOfInteractionFunctions]{Lemma}
\begin{HomogeneityOfRectifiedPolynomialInteractionFunction}
    \label{Lemma:HomogeneityOfRectifiedPolynomialInteractionFunction}
    The rectified polynomial interaction function (Equation \ref{Eqn:RectifiedPolynomialInteractionFunction}) is homogenous.
\end{HomogeneityOfRectifiedPolynomialInteractionFunction}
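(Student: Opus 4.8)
The plan is to prove homogeneity by a case analysis on the sign of the argument, mirroring the piecewise definition of $F_n$ in Equation \ref{Eqn:RectifiedPolynomialInteractionFunction}. The crucial observation that makes this work is that the scaling factor in the definition of homogeneity is restricted to $\alpha > 0$. Since $\alpha > 0$, multiplying $x$ by $\alpha$ never changes the sign of $x$: if $x \geq 0$ then $\alpha x \geq 0$, and if $x < 0$ then $\alpha x < 0$. Consequently, $x$ and $\alpha x$ always fall into the same branch of the piecewise definition, which is precisely what lets us factor the scaling factor out cleanly.

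First I would treat the case $x \geq 0$. Here $\alpha x \geq 0$ as well, so both $F_n(\alpha x)$ and $F_n(x)$ are governed by the polynomial branch, and the computation reduces to exactly the one carried out in Lemma \ref{Lemma:HomogeneityOfPolynomialInteractionFunction}:
\begin{align}
    F_n(\alpha x) = (\alpha x)^n = \alpha^n x^n = \alpha^n F_n(x).
\end{align}
Next I would treat the case $x < 0$, where $\alpha x < 0$ too, so both sides evaluate on the zero branch:
\begin{align}
    F_n(\alpha x) = 0 = \alpha^n \cdot 0 = \alpha^n F_n(x).
\end{align}
Combining the two cases shows $F_n(\alpha x) = \alpha^n F_n(x)$ for all $\alpha > 0$, so the rectified polynomial interaction function is homogenous with degree of homogeneity equal to the interaction vertex $n$, matching the polynomial case.

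The only real subtlety, and the step I would be most careful to state explicitly, is the sign-preservation argument: the entire proof hinges on the restriction $\alpha > 0$ guaranteeing that $x$ and $\alpha x$ lie in the same branch. This is exactly where a negative scaling factor would break the result, since it could send a positive argument to the zero branch or vice versa, so it is worth emphasizing that the homogeneity definition's domain restriction is doing essential work here rather than being a technicality. Beyond that, each branch is either an immediate reuse of the previous lemma or a trivial identity on zero, so no genuinely hard computation remains.
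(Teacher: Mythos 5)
Your proof is correct and follows essentially the same route as the paper: both arguments rest on the observation that $\alpha > 0$ preserves the sign of $x$, so $x$ and $\alpha x$ select the same branch of the piecewise definition, after which the polynomial branch factors as $\alpha^n x^n$ and the zero branch is trivial. The paper manipulates the piecewise expression directly while you phrase it as an explicit two-case analysis, but the content is identical.
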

\begin{proof}
    \begin{align}
        \label{Eqn:HomogeneityOfRectifiedPolynomialInteractionFunction}
        \begin{split}
            F_n\left(\alpha x\right) & = \begin{cases}
                \left( \alpha x \right)^n & \text{if } \left(\alpha x\right)\geq0, \\
                0   & \text{if } \left(\alpha x\right)<0,
            \end{cases} \\
            & = \begin{cases}
                \alpha^n  x^n & \text{if } \left(\alpha x\right)\geq0, \\
                0   & \text{if } \left(\alpha x\right)<0,
            \end{cases} \\
            & = \alpha^n \begin{cases}
                x^n & \text{if } x\geq0, \\
                0   & \text{if } x<0,
            \end{cases} \\
            & =  \alpha^n F_n(x) 
        \end{split}
    \end{align}
    Note that the sign of \(x\) is unchanged by scaling by \(\alpha > 0\), so we can change the conditions on the limits as we did in Equation \ref{Eqn:HomogeneityOfRectifiedPolynomialInteractionFunction}. Hence, the rectified polynomial interaction function is homogenous, with degree of homogeneity equal to the interaction vertex \(n\).
\end{proof}

\subsubsection{On Common Nonhomogenous Interaction Functions}

The leaky rectified polynomial interaction function (Equation \ref{Eqn:LeakyRectifiedPolynomialInteractionFunction}) is common in literature, alongside Equation \ref{Eqn:PolynomialInteractionFunction} and \ref{Eqn:RectifiedPolynomialInteractionFunction}. However, the leaky rectified polynomial is not homogenous. Empirically, we find it still behaves well under our modifications.

The Dense Associative Memory has been generalized further using an exponential interaction function \citep{Demircigil2017}. Another modification of the exponential interaction function has been used to allow for continuous states and an exponential capacity \citep{Ramsauer2021}. This interaction function has been analyzed in depth and linked to the attention mechanism in transformer architectures \citep{Vaswani2017}. For completion, we discuss our proposed modifications to the new, wildly popular interaction function:
\begin{align}
    \label{Eqn:ExponetialInteractionFunction}
    F\left(x\right) & = e^{x},
\end{align}
Clearly, the exponential interaction function is not homogenous, as 
\begin{align*}
    F(\alpha x) &= e^{\alpha x} \\
        &= e^{\alpha} e^{x} \\
        &= e^{\alpha} F(x)
\end{align*}
Since the constant \(\alpha\) does not have the form \(\alpha^k\) when pulled out of the function, the exponential function is not homogenous. However, we can analyze the exponential function specifically and relax the homogeneity constraint to show our modifications will not affect networks with exponential interaction functions. In particular, we need only show the sign of the difference between two exponentials is unaffected:
\begin{align*}
    \text{sign} \left[\alpha \left(F(x) - F(y)\right)\right] &= \text{sign} \left[\alpha \left(e^{x} - e^{y}\right)\right] \\
        &= \text{sign} \left[\alpha e^{x} - \alpha e^{y}\right] \\
        &= \text{sign} \left[e^{\log(\alpha)}e^{x} - e^{\log(\alpha)}e^{y}\right] \\
        &= \text{sign} \left[e^{\log(\alpha) x} - e^{\log(\alpha) y}\right].
\end{align*}
The behavior of the Dense Associative Memory using the exponential interaction function would be unchanged by normalizing the similarity scores before taking the exponential. This may help stabilize the continuous Dense Associative Memory and improve integrations in deep learning architectures. 

\subsection{Update in the Dense Associative Memory}
\label{Section:UpdateRuleModification}

We start with the right-hand side of Equation \ref{Eqn:ModernHopfieldUpdateOriginal}, introducing an arbitrary constant \(\alpha > 0\). We will then show this has no effect on the sign of the result, and we are free to choose \(\alpha = \frac{1}{N}\) to normalize the similarity scores by the network dimension.

\newtheorem{ModernHopfieldModificationTheorem}{Theorem}[subsection]
\newtheorem{UpdateRuleTheorem}[ModernHopfieldModificationTheorem]{Theorem}
\begin{UpdateRuleTheorem}
    \label{Theorem:UpdateRuleTheorem}
    The Dense Associative Memory, equipped with a homogenous interaction function, has unchanged update dynamics (Equation \ref{Eqn:ModernHopfieldUpdateOriginal}) when applying a scaling factor \(\alpha\) to similarity calculations inside the interaction function. That is:

    \begin{align*}
        & \textup{sign} \left[ \sum_{\mu} \left( F_n ( \zeta_{i}^{\mu} + \sum_{j \neq i} \zeta_{j}^{\mu} \xi_j^{(t)} ) - F_n ( -\zeta_{i}^{\mu} + \sum_{j \neq i} \zeta_{j}^{\mu} \xi_j^{(t)} ) \right) \right] \\
        = \;& \textup{sign} \left[ \sum_{\mu} \left( F_n ( \alpha ( \zeta_{i}^{\mu} + \sum_{j \neq i} \zeta_{j}^{\mu} \xi_j^{(t)} ) ) - F_n ( \alpha ( -\zeta_{i}^{\mu} + \sum_{j \neq i} \zeta_{j}^{\mu} \xi_j^{(t)} ) ) \right) \right].
    \end{align*}
\end{UpdateRuleTheorem}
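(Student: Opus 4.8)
The plan is to reduce everything to the homogeneity property established in Lemmas \ref{Lemma:HomogeneityOfPolynomialInteractionFunction} and \ref{Lemma:HomogeneityOfRectifiedPolynomialInteractionFunction}, together with the elementary fact that the \(\text{Sign}\) function is invariant under multiplication by a strictly positive constant. The whole argument turns on a single observation: because \(F_n\) is homogeneous of degree \(n\), the scaling factor \(\alpha\) that we insert into each argument can be pulled straight back out of the interaction function as a common factor \(\alpha^n\), whereupon it factors out of the entire bracketed sum and, being positive, drops out of the sign entirely.

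Concretely, I would first abbreviate the two repeated arguments, writing \(A_\mu = \xi_i^\mu + \sum_{j \neq i}\xi_j^\mu \zeta_j^{(t)}\) and \(B_\mu = -\xi_i^\mu + \sum_{j \neq i}\xi_j^\mu \zeta_j^{(t)}\), so that the right-hand side of the claimed identity reads \(\text{Sign}\bigl[\sum_\mu (F_n(\alpha A_\mu) - F_n(\alpha B_\mu))\bigr]\). Next I would apply homogeneity term by term, using \(F_n(\alpha A_\mu) = \alpha^n F_n(A_\mu)\) and \(F_n(\alpha B_\mu) = \alpha^n F_n(B_\mu)\); this step is exactly where the lemmas are invoked and is valid precisely because \(\alpha > 0\) (the condition under which homogeneity was proved). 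I would then factor \(\alpha^n\) out of each difference and out of the sum over \(\mu\), obtaining \(\text{Sign}\bigl[\alpha^n \sum_\mu (F_n(A_\mu) - F_n(B_\mu))\bigr]\). Finally, since \(\alpha > 0\) forces \(\alpha^n > 0\), the identity \(\text{Sign}(c\,y) = \text{Sign}(y)\) for any positive \(c\) collapses this to the left-hand side, completing the proof. Specializing \(\alpha = \tfrac{1}{N}\) at the end recovers the intended normalization.

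The argument is short, so there is no deep obstacle; the work lies in being careful about where the hypotheses are used. The one point worth stressing is that the \emph{same} exponent \(\alpha^n\) must come out of both the \(A_\mu\) and \(B_\mu\) terms and uniformly across all \(\mu\) — this is guaranteed only because \(F_n\) has a single fixed degree of homogeneity \(k = n\) that does not depend on the argument, so if one were tempted to generalize to interaction functions with argument-dependent scaling behavior (such as the leaky rectified polynomial, which the text notes is not homogeneous) the factorization step would fail and the sign could in principle change. The positivity of \(\alpha\) is likewise essential, both to license the homogeneity identity and to ensure \(\alpha^n > 0\) so that the sign is preserved rather than flipped; I would flag both of these dependencies explicitly rather than let them pass silently.
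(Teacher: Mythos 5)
Your proof is correct and uses exactly the same two ingredients as the paper's: invariance of \(\text{Sign}\) under multiplication by a positive constant, and homogeneity of \(F_n\) (Lemmas \ref{Lemma:HomogeneityOfPolynomialInteractionFunction} and \ref{Lemma:HomogeneityOfRectifiedPolynomialInteractionFunction}) to move the factor \(\alpha^n\) across the interaction function. The only difference is direction — you start from the scaled side and pull \(\alpha^n\) out, whereas the paper starts from the unscaled side and pushes an arbitrary positive factor in (picking up a rescaled \(\alpha'\)) — which is immaterial, and your explicit flagging of where positivity and the fixed degree of homogeneity are used is a welcome tightening.
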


\begin{proof}

The sign of any real number is unaffected by scaling factor \(\alpha > 0\):

\begin{align*}
    \begin{split}
        & \text{sign} \left[ \sum_{\mu} \left( F_n ( \zeta_{i}^{\mu} + \sum_{j \neq i} \zeta_{j}^{\mu} \xi_j^{(t)} ) - F_n ( -\zeta_{i}^{\mu} + \sum_{j \neq i} \zeta_{j}^{\mu} \xi_j^{(t)} ) \right) \right] \\
    = \;& \text{sign} \left[ \alpha \sum_{\mu} \left( F_n ( \zeta_{i}^{\mu} + \sum_{j \neq i} \zeta_{j}^{\mu} \xi_j^{(t)} ) - F_n ( -\zeta_{i}^{\mu} + \sum_{j \neq i} \zeta_{j}^{\mu} \xi_j^{(t)} ) \right) \right] \\
    = \;& \text{sign} \left[ \sum_{\mu} \left( \alpha F_n ( \zeta_{i}^{\mu} + \sum_{j \neq i} \zeta_{j}^{\mu} \xi_j^{(t)} ) - \alpha F_n ( -\zeta_{i}^{\mu} + \sum_{j \neq i} \zeta_{j}^{\mu} \xi_j^{(t)} ) \right) \right]
    \end{split}
\end{align*}
Moving \(\alpha\) within the interaction function \(F_n\) requires constraints on the interaction function. We require that the sign of the difference remains the same. A homogenous interaction function gives us this condition, although it is slightly stronger than is required. Using the assertion that \(F_n\) is homogenous:

\begin{align*}
    = \;& \text{sign} \left[ \sum_{\mu} \left( \alpha F_n ( \zeta_{i}^{\mu} + \sum_{j \neq i} \zeta_{j}^{\mu} \xi_j^{(t)} ) - \alpha F_n ( -\zeta_{i}^{\mu} + \sum_{j \neq i} \zeta_{j}^{\mu} \xi_j^{(t)} ) \right) \right] \\
    = \;& \text{sign} \left[ \sum_{\mu} \left( F_n ( \alpha^\prime ( \zeta_{i}^{\mu} + \sum_{j \neq i} \zeta_{j}^{\mu} \xi_j^{(t)} ) ) - F_n ( \alpha^\prime ( -\zeta_{i}^{\mu} + \sum_{j \neq i} \zeta_{j}^{\mu} \xi_j^{(t)} ) ) \right) \right] 
\end{align*}

Since the scaled factor \(\alpha^\prime\) is still arbitrary, we are free to select any (positive) value we like without changing the result.

\end{proof}

Therefore, our modified network's update rule will give the same behavior as the original update rule in Equation \ref{Eqn:ModernHopfieldUpdateOriginal}. Our modified update rule is given by:
\begin{align}
    \label{Eqn:ModernHopfieldUpdateModified}
    \xi_i^{(t+1)} &= \text{sign} \left[ \sum_{\mu} \left( F_n ( \alpha ( \zeta_{i}^{\mu} + \sum_{j \neq i} \zeta_{j}^{\mu} \xi_j^{(t)} ) ) - F_n ( \alpha ( -\zeta_{i}^{\mu} + \sum_{j \neq i} \zeta_{j}^{\mu} \xi_j^{(t)} ) ) \right) \right] 
\end{align}
All that is left is to choose a value for the scaling factor \(\alpha\). As discussed, we suggest choosing \(\alpha = \frac{1}{N}\), the inverse of the network dimension, such that the similarity scores are normalized between \(-1\) and \(1\), which nicely avoids floating point overflow. It may appear we are trading one floating point inaccuracy for another, as now our worst case would have small similarity scores (intermediate values close to \(0\)) mapped even closer to \(0\) by the polynomial interaction functions, where again floating point numbers are inaccurate. However, the failure case here is to set the value to exactly \(0.0\) rather than ``infinity'' or ``NaN'', and hence computation may continue albeit with reduced accuracy. Furthermore, once training has progressed slightly the memory vectors will likely be quite similar to the data, avoiding this problem. Finally, we could further tune the scaling factor if numerical instability is still a concern, as we have shown a general scaling factor is admissible. In practice, we found this was not required.

\subsection{Learning in the Dense Associative Memory}

Reasoning about the learning rule (Equation \ref{Eqn:ModernHopfieldLearningOriginal}) is slightly trickier than the update rule. We must ensure the network learning remains consistent rather than just the sign of the energy difference. Furthermore, there is already a scaling factor \(\beta\) present. We will show that we can pull the existing scaling factor within the interaction function and keep its intended action of shifting the argument of the \(\tanh\) function, and hence that we can achieve the same calculation as the original network with our modifications. The argument here is largely the same as in Section \ref{Section:UpdateRuleModification}.

\newtheorem{LearningRuleTheorem}[ModernHopfieldModificationTheorem]{Theorem}
\begin{LearningRuleTheorem}
    \label{Theorem:LearningRuleTheorem}
    The Dense Associative Memory, equipped with a homogenous interaction function, has unchanged learning behavior (Equation \ref{Eqn:ModernHopfieldLearningOriginal}) when moving the scaling factor \(\beta\) inside the interaction function evaluations, up to adjusting the scaling factor. That is:

    \begin{align*}
        &\tanh \left[ \beta^\prime \sum_{\mu} \left( F_n ( \zeta_{i}^{\mu} + \sum_{j \neq i} \zeta_{j}^{\mu} \xi_{j}^{(t)} ) - F_n ( -\zeta_{i}^{\mu} + \sum_{j \neq i} \zeta_{j}^{\mu} \xi_{j}^{(t)} ) \right) \right]\\
        = &\tanh \left[ \sum_{\mu} \left( F_n ( \beta (\zeta_{i}^{\mu} + \sum_{j \neq i} \zeta_{j}^{\mu} \xi_{j}^{(t)}) ) - F_n ( \beta (-\zeta_{i}^{\mu} + \sum_{j \neq i} \zeta_{j}^{\mu} \xi_{j}^{(t)}) ) \right) \right].
    \end{align*}
\end{LearningRuleTheorem}

\begin{proof}
    Equation \ref{Eqn:ModernHopfieldLearningOriginal} defines a loss function over which a gradient descent is applied to update the memory vectors \(\zeta\). To show this gradient descent is unchanged by moving the scaling factor \(\beta\) inside the interaction function evaluations, we focus on the predicted neuron value in Equation \ref{Eqn:ModernHopfieldLearningOriginal} and apply the same algebra as in Theorem \ref{Theorem:UpdateRuleTheorem} to take the scaling factor \(\beta^\prime\) inside the interaction function. Note that this also requires the homogeneity of the interaction function, and may alter the value the scaling factor \(\beta\), but will ensure the argument to the \(\tanh\) (and hence the gradient) remains the same. The exact gradient expression is eschewed here but remains unchanged from the original.    
\end{proof}
Therefore, our modified learning rule has the form:

\begin{equation}
    \begin{gathered}
    \label{Eqn:ModernHopfieldLearningModified}
    \mathcal{L} = \sum_{a} \sum_{i} \left( \xi_{a, i} - C_{a, i} \right)^{2m} \\
    C_{a, i} = \tanh \left[ \sum_{\mu} \left( F_n ( \beta (\zeta_{i}^{\mu} + \sum_{j \neq i} \zeta_{j}^{\mu} \xi_{j}^{(t)}) ) - F_n ( \beta (-\zeta_{i}^{\mu} + \sum_{j \neq i} \zeta_{j}^{\mu} \xi_{j}^{(t)}) ) \right) \right].
    \end{gathered}
\end{equation}

Krotov and Hopfield suggest a value of \(\beta = \frac{1}{T^n}\). We suggest a modified value of \(\beta = \frac{1}{NT}\). Since interaction functions of interest have a degree of homogeneity equal to the interaction vertex \(n\), shifting the scaling factor inside is effectively equivalent to taking the exponent of \(\frac{1}{n}\), so we can remove the exponent from Krotov and Hopfield's suggestion. Furthermore, as in Section \ref{Section:UpdateRuleModification} we suggest normalizing the similarity score by the network dimension to be bounded between \(-1\) and \(1\). It may seem alarming that we suggest massively lowering the similarity score in this equation, as it may affect the argument passed to the \(\tanh\) function and hence the magnitude of the gradients used in learning, but we can always simply rescale \(\beta\) using the temperature to increase this value again if required. However, the default behavior of the network now results in rapidly shrinking intermediate values during training, rather than exploding values that are often unrecoverable. By tuning \(\beta\) we can shift the argument to \(\tanh\) just as we could in the unmodified network while still avoiding floating point overflow.

\section{Hyperparameter Tuning}
\label{Section:HyperparameterTuning}

The original Dense Associative Memory suffered from very strict hyperparameter requirements. Furthermore, changing the value of the interaction vertex would significantly change the hyperparameters that would train the model well. We find that our modifications --- particularly, normalizing the similarity scores in the learning rule (Equation \ref{Eqn:ModernHopfieldLearningModified}) --- removed the dependence on the interaction vertex, meaning we can reuse the same hyperparameters for a task even as we change the interaction vertex.

We focus on the most important hyperparameters for learning: the initial learning rate and temperature. Other hyperparameters were tuned but did not display behavior as dramatic as we present here. We use a learning rate decay of \(0.999\) per epoch, a momentum of \(0.0\), and an error exponent of \(m=1.0\). We found similar results using a decay rate of \(1.0\) and higher values for momentum. We also found we did not require changing the error exponent \(m\), which \citet{KrotovHopfield2016} found to be useful in learning higher interaction vertices. This perhaps indicates we can remove this hyperparameter and simplify the network.

The network is trained on \(20\) randomly generated bipolar vectors of dimension \(100\). Even for the lowest interaction vertex \(n=2\) this task is perfectly learnable. Larger dimensions and other dataset sizes were tested with similar results. After training, we probe the network with the learned states; if the probes move only a small distance from the learned states, the network operates as an acceptable associative memory. We measure the average distance from the final, stable states to the learned states, for which a lower value is better. We repeat the experiment five times for each combination of hyperparameters. Select interaction vertices are shown below, and the full results can be found in Appendix \ref{Section:AppendixResults}. In particular, Appendix \ref{Appendix:ModifiedDim100LargeN} shows our results for interaction vertices up to \(n=100\); far above any interaction vertices documented in other literature.

\subsection{Original Network Hyperparameter Results}
\label{Section:OriginalNetworkAutoassociativeTask}

\begin{figure}[H]
    \centering
    \begin{subfigure}[t]{0.48\textwidth}
        \includegraphics[width=\textwidth]{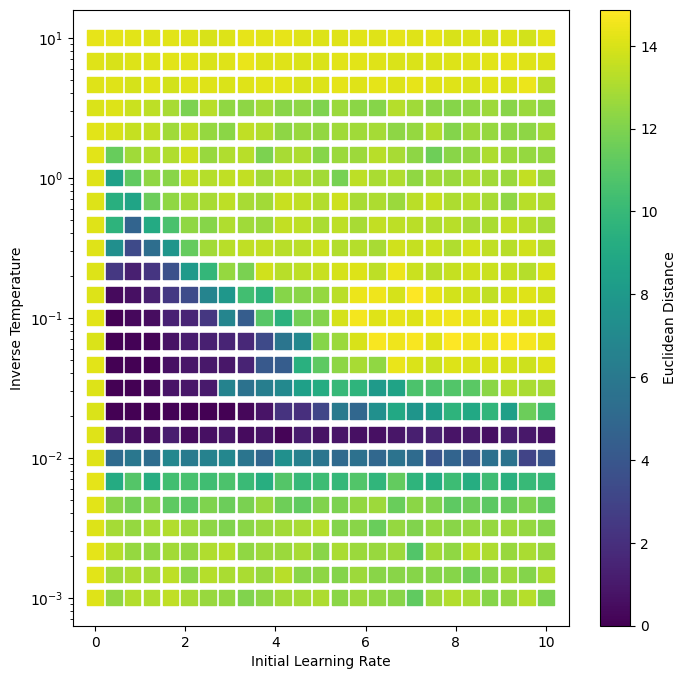}
        \caption{\(n=2\)}
    \end{subfigure}
    \begin{subfigure}[t]{0.48\textwidth}
        \includegraphics[width=\textwidth]{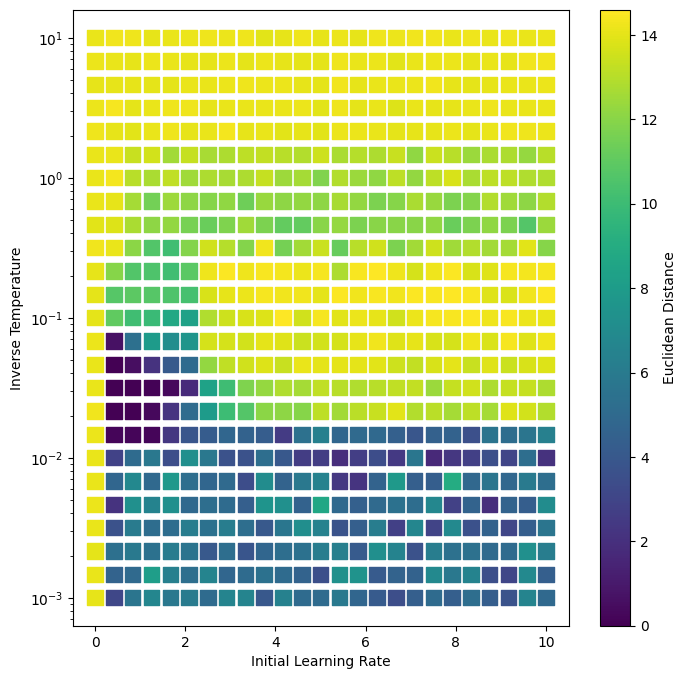}
        \caption{\(n=5\)}
    \end{subfigure}
    \hfill
    \begin{subfigure}[t]{0.48\textwidth}
        \includegraphics[width=\textwidth]{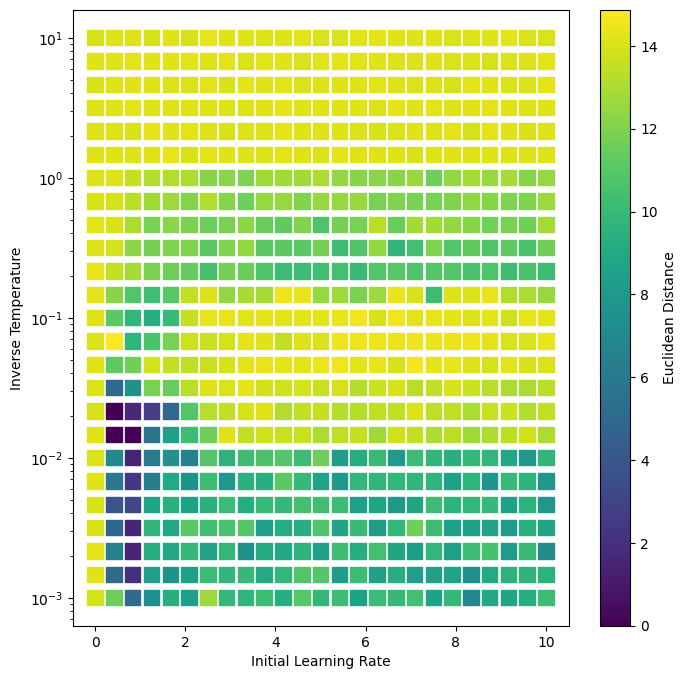}
        \caption{\(n=10\)}
    \end{subfigure}
    \begin{subfigure}[t]{0.48\textwidth}
        \includegraphics[width=\textwidth]{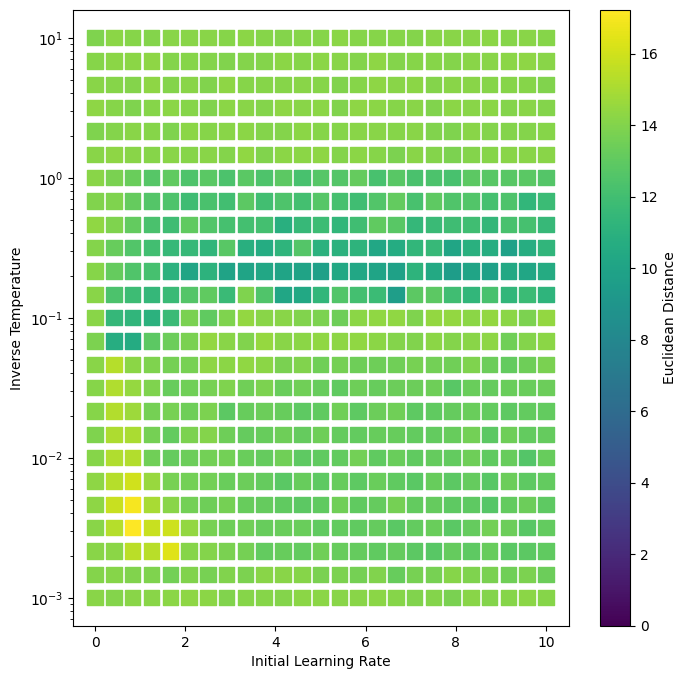}
        \caption{\(n=20\)}
        \label{Fig:OriginalModelHyperparameterSearchN20}
    \end{subfigure}
    
    \caption{Coarse hyperparameter search space for the original network, measuring the Euclidean distance between learned states and relaxed states over various interaction vertices. Smaller distances correspond to better recall and hence better a better associative memory.}
    \label{Fig:OriginalModelHyperparameterSearch}
\end{figure}

\begin{figure}[H]
    \centering
    \begin{subfigure}[t]{0.48\textwidth}
        \includegraphics[width=\textwidth]{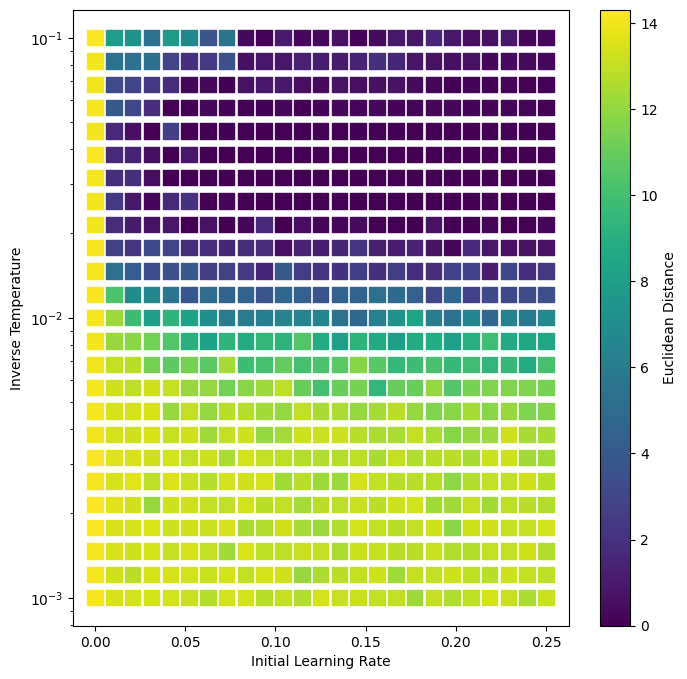}
        \caption{\(n=2\)}
    \end{subfigure}
    \begin{subfigure}[t]{0.48\textwidth}
        \includegraphics[width=\textwidth]{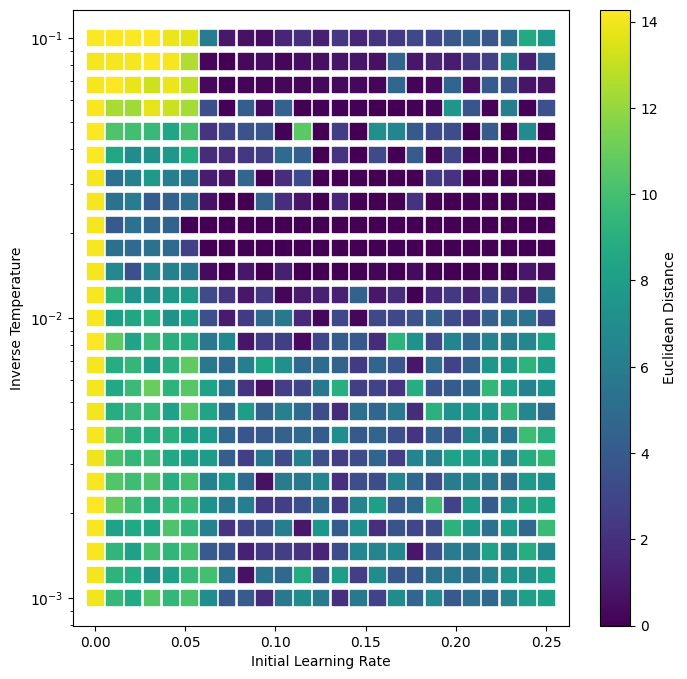}
        \caption{\(n=5\)}
    \end{subfigure}
    \hfill
    \begin{subfigure}[t]{0.48\textwidth}
        \includegraphics[width=\textwidth]{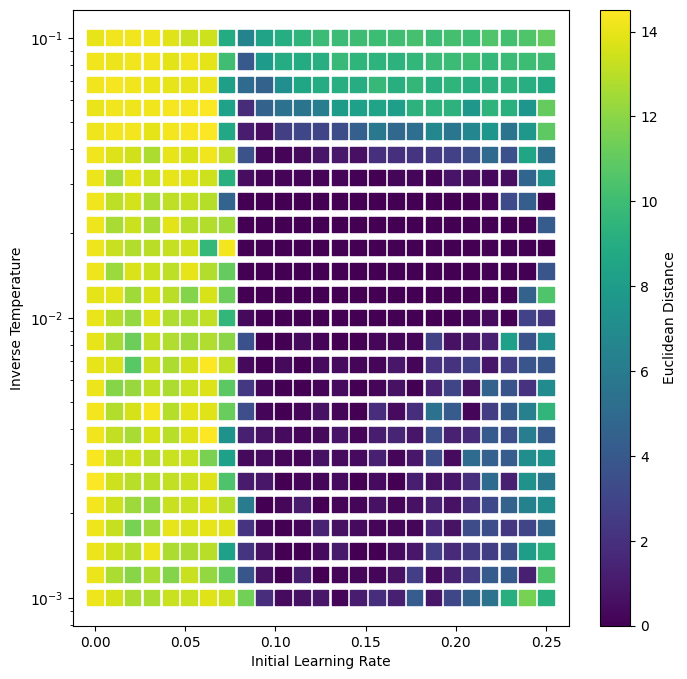}
        \caption{\(n=10\)}
    \end{subfigure}
    \begin{subfigure}[t]{0.48\textwidth}
        \includegraphics[width=\textwidth]{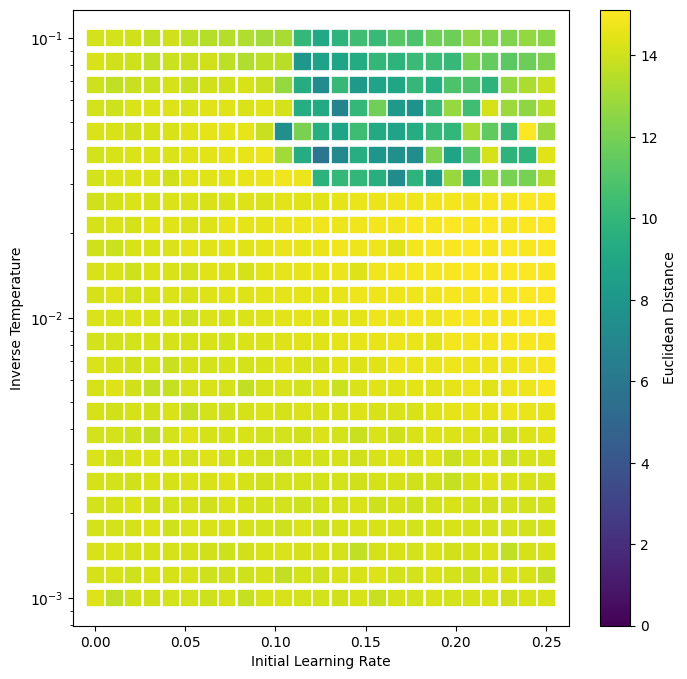}
        \caption{\(n=20\)}
    \end{subfigure}
    
    \caption{Fine hyperparameter search space for the original network, measuring the Euclidean distance between learned states and relaxed states over various interaction vertices. Smaller distances correspond to better recall and hence better a better associative memory.}
\end{figure}

Figure \ref{Fig:OriginalModelHyperparameterSearch} shows the hyperparameters across networks with various interaction vertices. The color of the heat map shows how far the relaxed states are from the original learned states, with lower / darker values being better. The optimal region --- that is, the combination of hyperparameters that give low distances --- is somewhere around \(\frac{1}{T} \approx 10^{-2}\) and learning rate \(\approx 0.5\) but shifts considerably with the interaction vertex. We also find a significant increase in distance with the learning rate; higher learning rates tend to degrade network performance. At even modest interaction vertices we find the optimal region is fleeting enough to not appear in our grid search. It is tempting to claim that a finer grid search may reveal the region to persist. However, closer inspection of Figure \ref{Fig:OriginalModelHyperparameterSearchN20} shows that not only has the optimal region vanished at this granularity, but the same region has \textit{increased} the distance measure compared to its surroundings. Even if the optimal region exists and is very small, it is apparently surrounded by an increasingly suboptimal region. This is troublesome and makes working with the network difficult.

Note that we have avoided floating point overflow by engineering our experiments to remain within the bounds of a double. In general, this network would fail to train for larger interaction vertices or data dimensions. However, the performance degradation seen at larger interaction vertices in Figure \ref{Fig:OriginalModelHyperparameterSearch} is not due to floating point overflow.

\subsection{Modified Network Hyperparameter Results}
\label{Section:ModifiedNetworkAutoassociativeTask}

\begin{figure}[H]
    \centering
    \begin{subfigure}[t]{0.48\textwidth}
        \includegraphics[width=\textwidth]{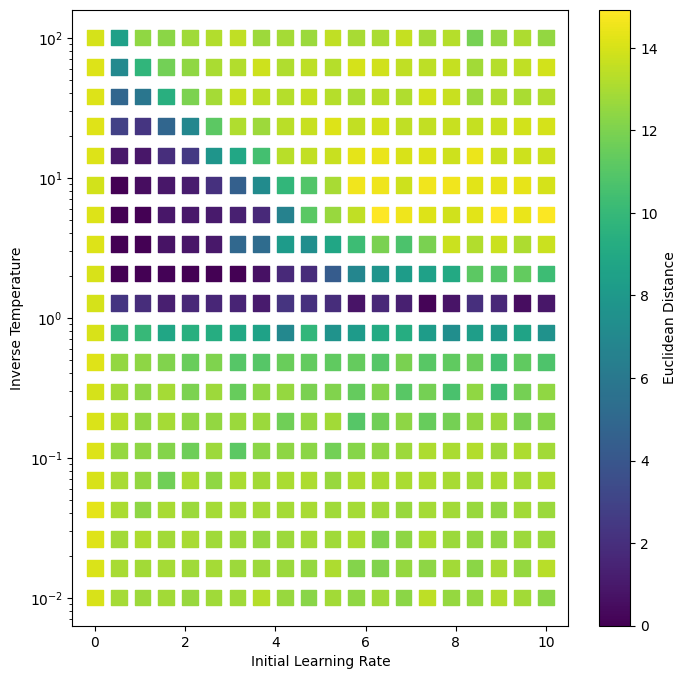}
        \caption{\(n=2\)}
    \end{subfigure}
    \begin{subfigure}[t]{0.48\textwidth}
        \includegraphics[width=\textwidth]{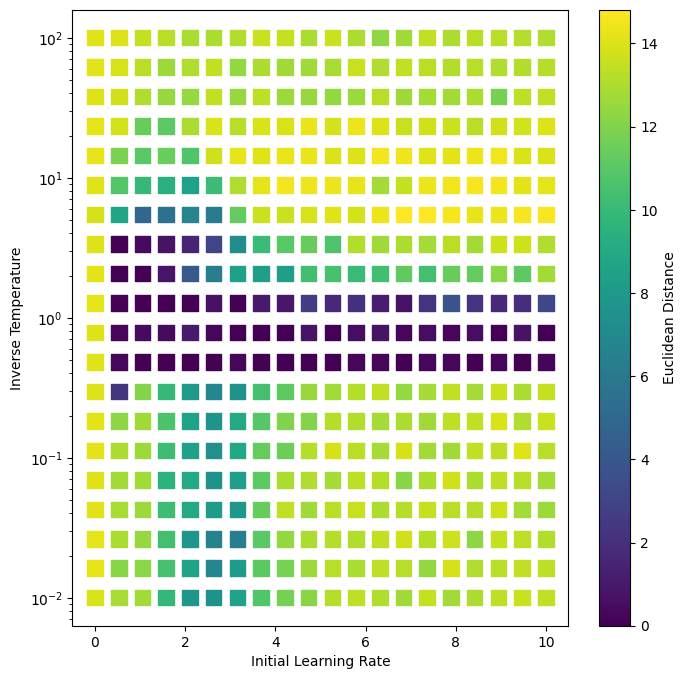}
        \caption{\(n=5\)}
    \end{subfigure}
    \hfill
    \begin{subfigure}[t]{0.48\textwidth}
        \includegraphics[width=\textwidth]{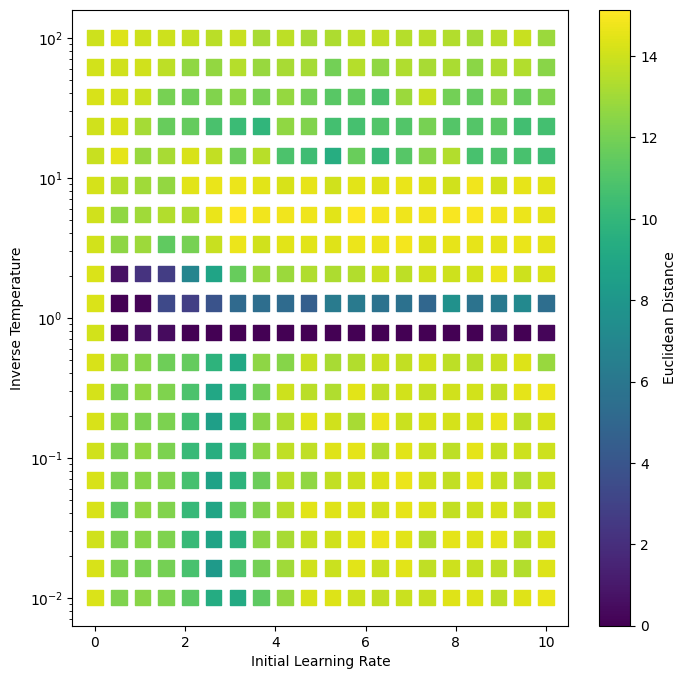}
        \caption{\(n=10\)}
    \end{subfigure}
    \begin{subfigure}[t]{0.48\textwidth}
        \includegraphics[width=\textwidth]{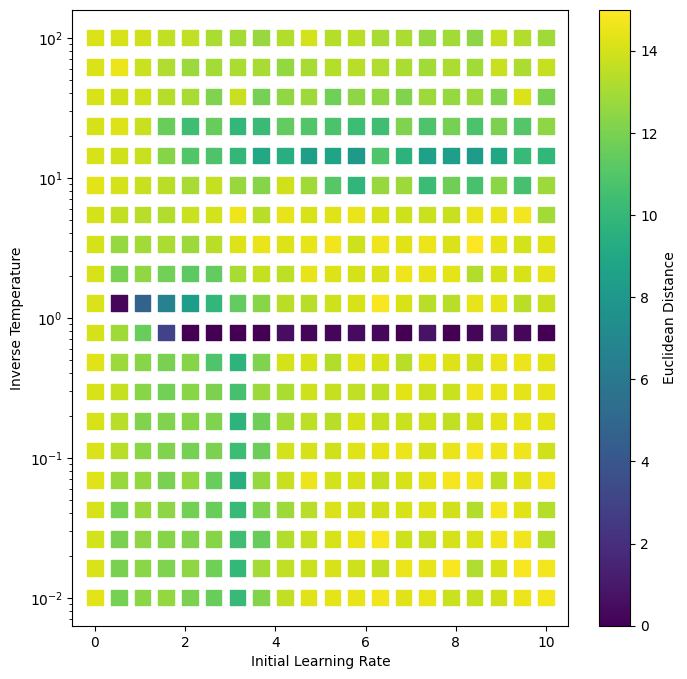}
        \caption{\(n=20\)}
    \end{subfigure}
    
    \caption{Coarse hyperparameter search space for the modified network, measuring the Euclidean distance between learned states and relaxed states over various interaction vertices. Smaller distances correspond to better recall and hence better a better associative memory.}
    \label{Fig:ModifiedModelHyperparameterSearch}
\end{figure}

\begin{figure}[H]
    \centering
    \begin{subfigure}[t]{0.48\textwidth}
        \includegraphics[width=\textwidth]{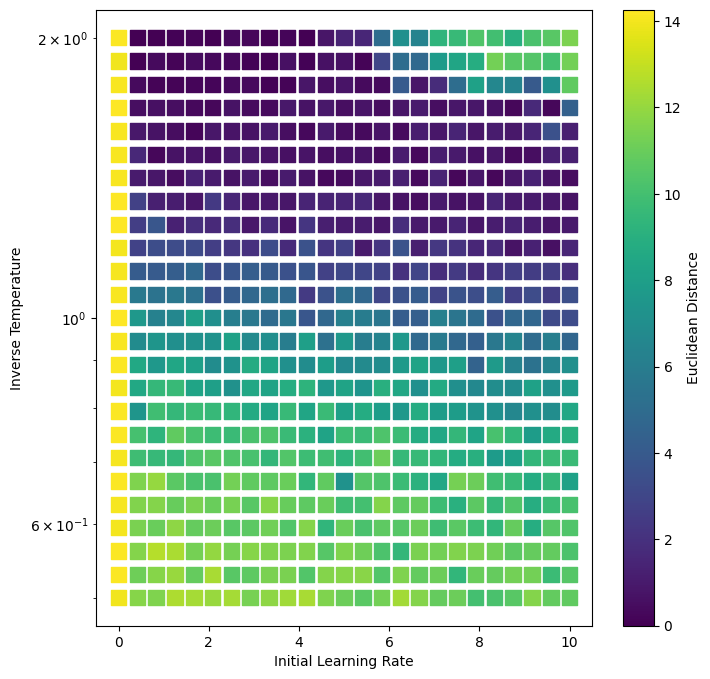}
        \caption{\(n=2\)}
    \end{subfigure}
    \begin{subfigure}[t]{0.48\textwidth}
        \includegraphics[width=\textwidth]{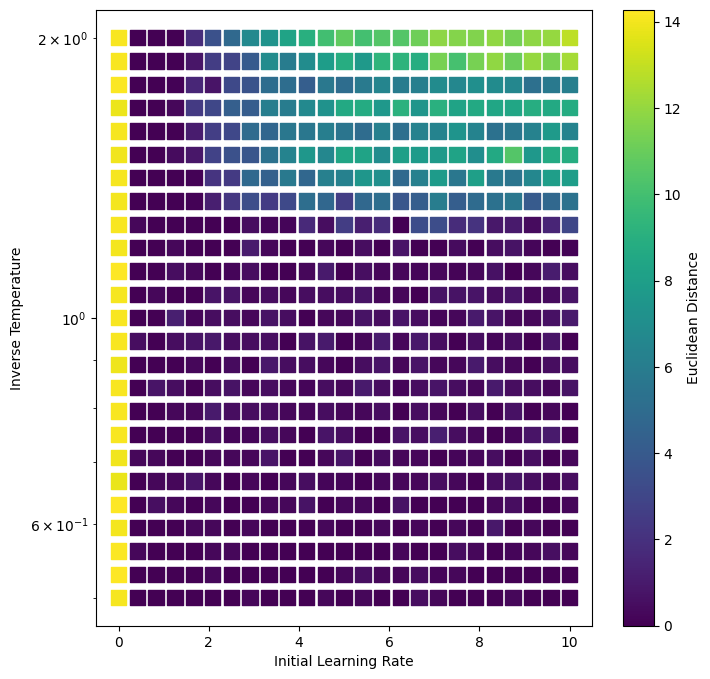}
        \caption{\(n=5\)}
    \end{subfigure}
    \hfill
    \begin{subfigure}[t]{0.48\textwidth}
        \includegraphics[width=\textwidth]{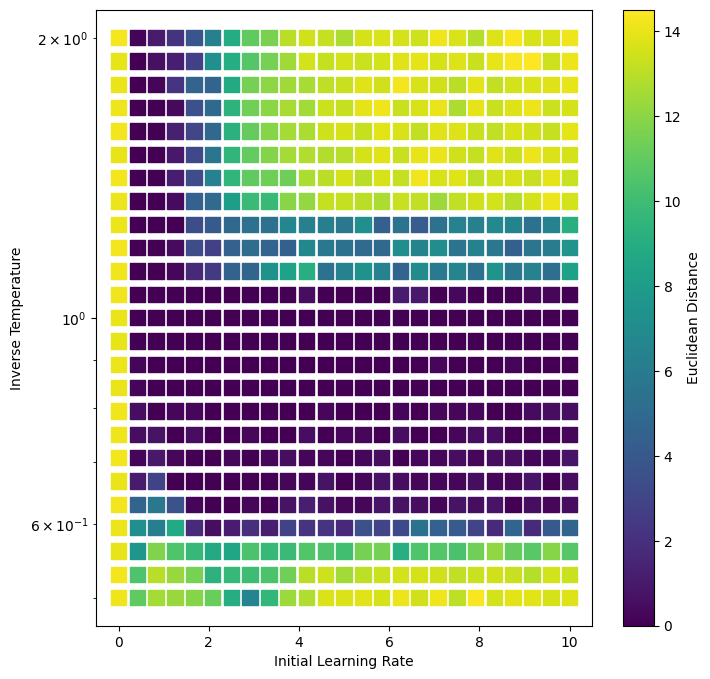}
        \caption{\(n=10\)}
    \end{subfigure}
    \begin{subfigure}[t]{0.48\textwidth}
        \includegraphics[width=\textwidth]{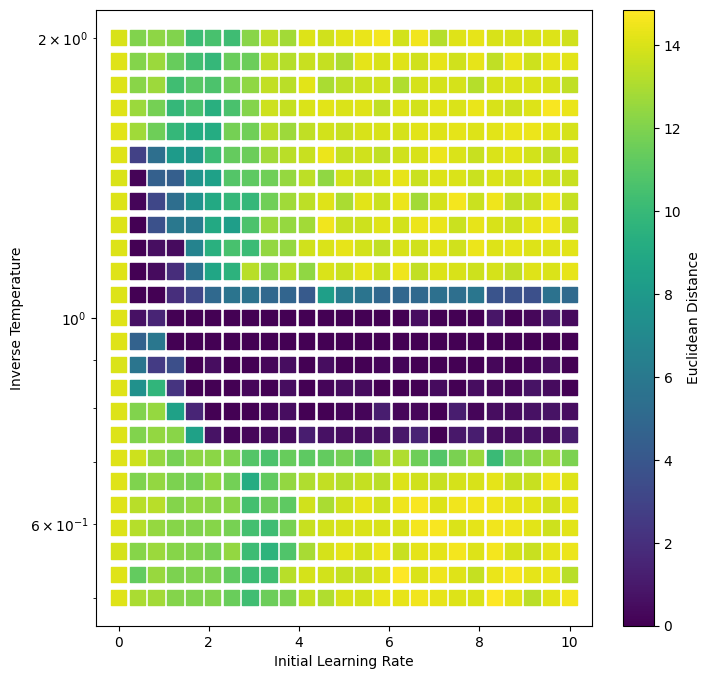}
        \caption{\(n=20\)}
    \end{subfigure}
    
    \caption{Fine hyperparameter search space for the modified network, measuring the Euclidean distance between learned states and relaxed states over various interaction vertices. Smaller distances correspond to better recall and hence better a better associative memory.}
    \label{Fig:ModifiedModelHyperparameterSearchTight}
\end{figure}

Figure \ref{Fig:ModifiedModelHyperparameterSearch} shows the same hyperparameter search for our modified network. Note that we have shifted the scale of the inverse temperature as discussed in Section \ref{Section:ModificationFormalization}. As in Figure \ref{Fig:OriginalModelHyperparameterSearch} we find the optimal region shifts slightly for small interaction vertices (\(n \leq 5\)) but unlike the original network we find the region stabilizes and remains substantial for large interaction vertices. Figure \ref{Fig:ModifiedModelHyperparameterSearchTight} shows a finer grid search over the region of interest, showing the optimal region stabilizes around \(\frac{1}{T} \approx 9 \cdot 10^{-1}\). We find it is common across many network dimensions and task sizes for the inverse temperature to stabilize near \(1\), making it much easier to tune the hyperparameters of the Dense Associative Memory. The optimal region also extends to much larger initial learning rates than it does in the original network. Most notably, we find that the optimal region's position remains stable and size remains large across many values of the interaction vertex for the same network dimension.

\subsection{MNIST Classification}

So far, we have focused on the Dense Associative Memory as an autoassociative memory, where all neurons are updated at each step and may be updated numerous times until the state reaches stability. Another, perhaps more popular use case of the network in current literature is as a classifier. By splitting the memory vectors into two parts --- a section for input data and a section for classes as logits --- the network can be run as a classifier by only updating the classification neurons, and only updating those neurons once \citep{KrotovHopfield2016}. Krotov and Hopfield also found it was necessary to leave the weights of the classification neurons unclamped, unlike the input data section which remained clamped between \(-1\) and \(1\). This is another step away from traditional autoassociative memories, but the resulting network is still worth investigating with our modifications due to its popularity. \citet{KrotovHopfield2016} show an equivalence between the Dense Associative Memory operating in this mode and a single-hidden-layer feed-forward neural network by taking the Taylor expansion of the similarity score calculation and ignoring some crosstalk terms. In doing this, the value of \(\beta\) was also set to cancel some constants from the Taylor expansion. It is difficult to say how much of the literature is using the autoassociative memory model compared to the feed-forward equivalent, however we suspect that the feed-forward equivalent effectively implements some of our modifications (namely, normalizing the value \(\beta\) by the network dimension) which may explain the popularity of this mode, as the network is more stable. 

In our results below, we have trained the Dense Associative Memory on the MNIST dataset and note the validation F1 score across hyperparameter space. We have used the autoassociative memory model, rather than the feed-forward equivalent. This means we have \textit{not} explicitly ignored the effects of the classification neurons on one another, as is done in constructing the feed-forward equivalent, although the effect is likely negligible. Note that we have significantly different scales for the original and modified network's values of \(\beta\), which is not seen in the previous results. We believe this is due to leaving some memory weights unclamped, as well as only updating a small number of neurons as required for classification, but again the optimal value of the inverse temperature stabilizes around \(1\) for our modified network. Notably, our range of \(\beta\) for the original network matches the range found by \citep{KrotovHopfield2016}. In all experiments we trained the network for 500 epochs with 256 memory vectors.

\begin{figure}[H]
    \centering
    \begin{subfigure}[t]{0.48\textwidth}
        \includegraphics[width=\textwidth]{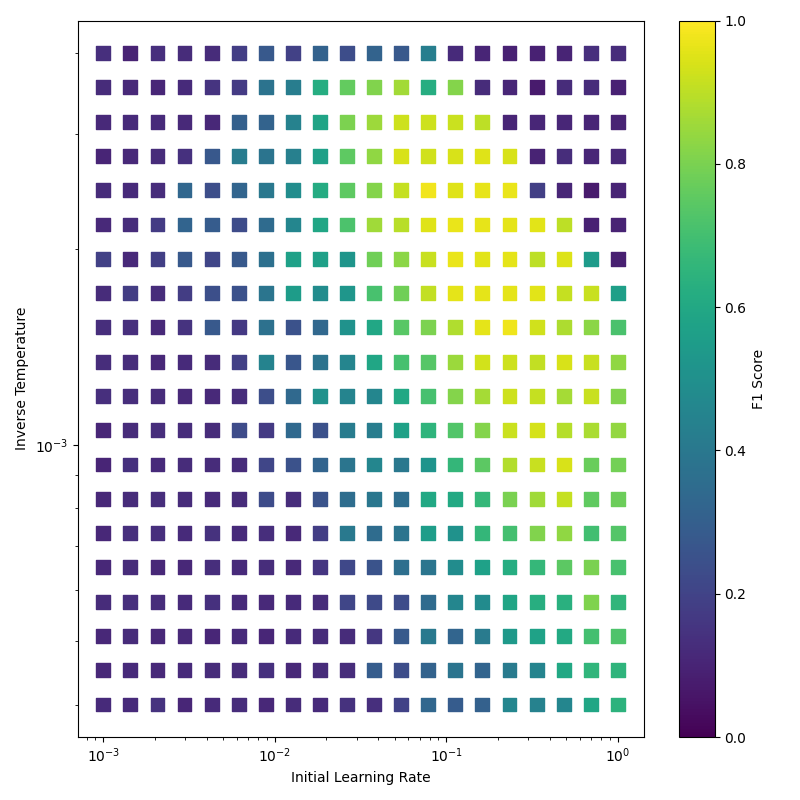}
        \caption{\(n=2\)}
    \end{subfigure}
    \begin{subfigure}[t]{0.48\textwidth}
        \includegraphics[width=\textwidth]{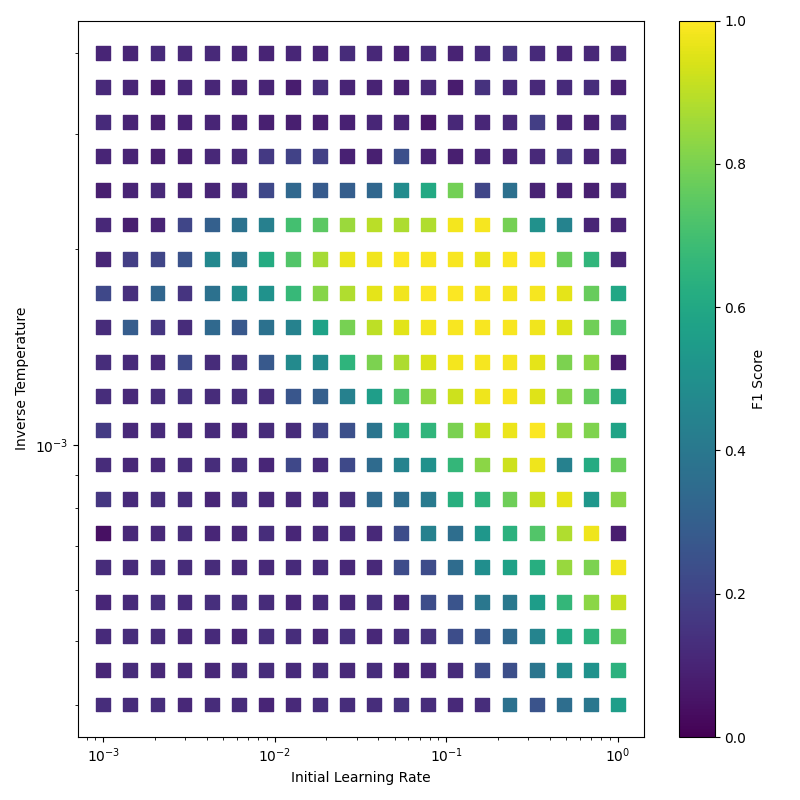}
        \caption{\(n=5\)}
    \end{subfigure}
    \hfill
    \begin{subfigure}[t]{0.48\textwidth}
        \includegraphics[width=\textwidth]{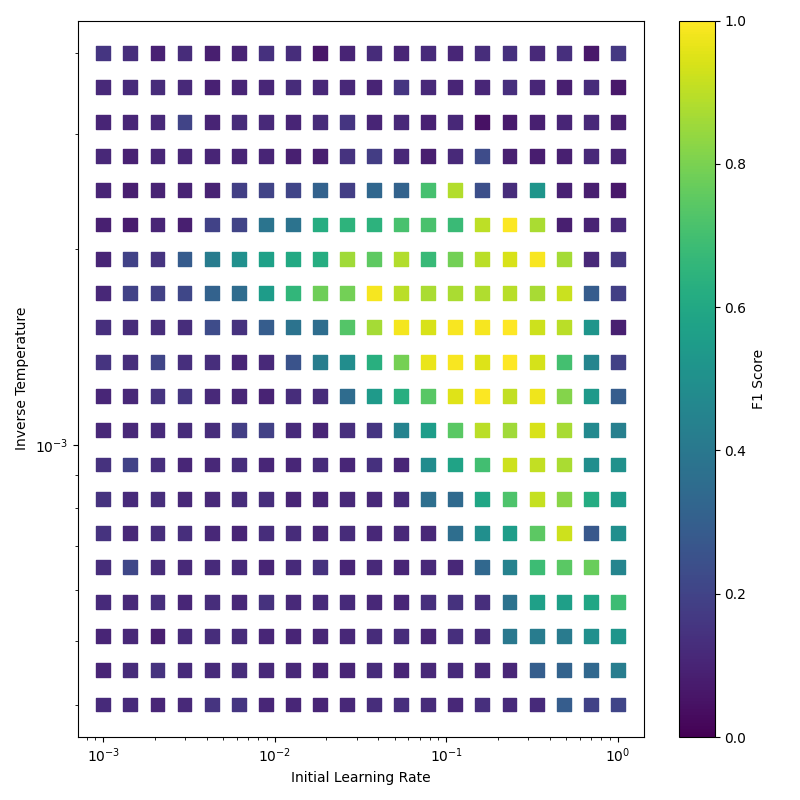}
        \caption{\(n=10\)}
    \end{subfigure}
    \begin{subfigure}[t]{0.48\textwidth}
        \includegraphics[width=\textwidth]{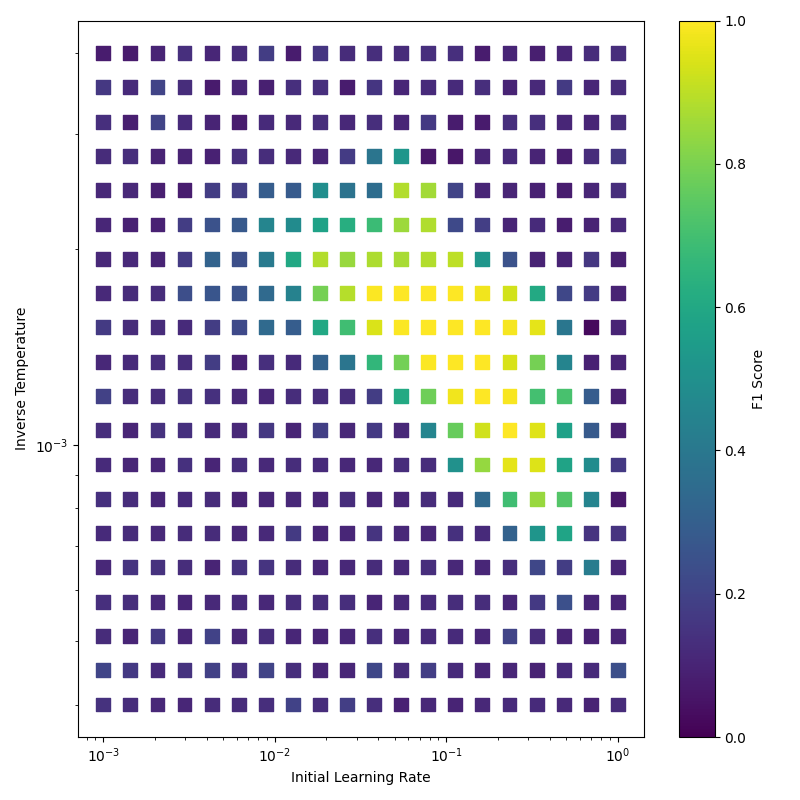}
        \caption{\(n=20\)}
    \end{subfigure}
    \caption{Hyperparameter search space for the original network, measuring the validation F1 score on the MNIST dataset. A larger F1 score corresponds to a better performing network.}
    \label{Fig:OriginalModelMNISTHyperparameterSearch}
\end{figure}

\begin{figure}[H]
    \centering
    \begin{subfigure}[t]{0.48\textwidth}
        \includegraphics[width=\textwidth]{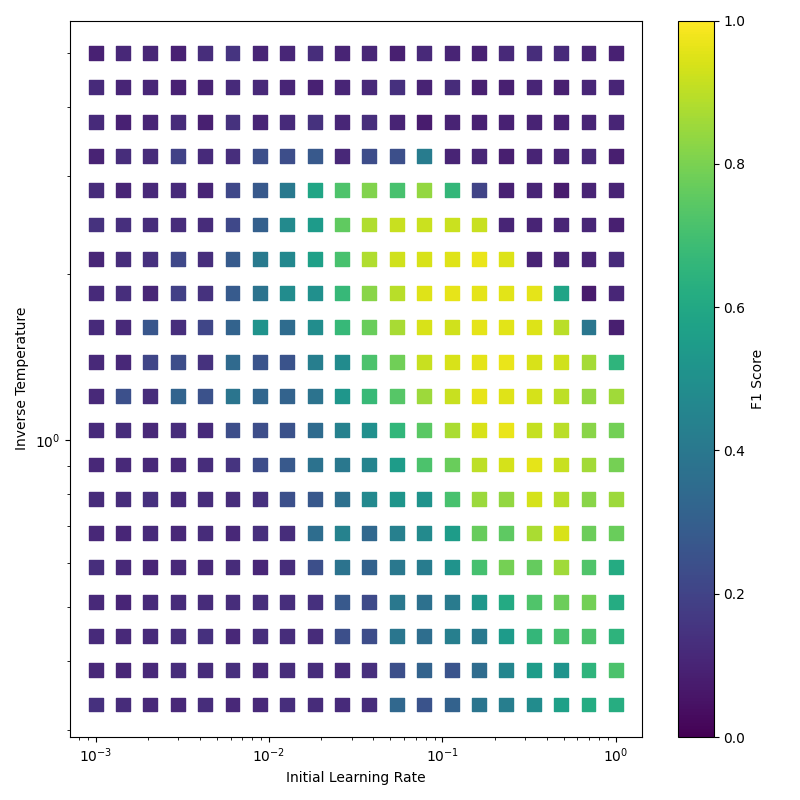}
        \caption{\(n=2\)}
    \end{subfigure}
    \begin{subfigure}[t]{0.48\textwidth}
        \includegraphics[width=\textwidth]{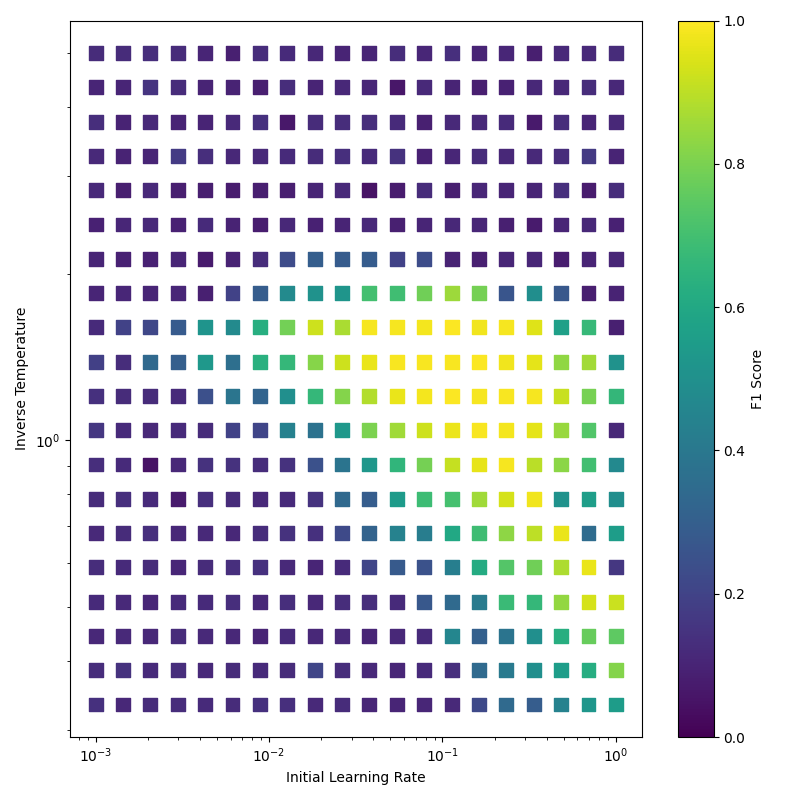}
        \caption{\(n=5\)}
    \end{subfigure}
    \hfill
    \begin{subfigure}[t]{0.48\textwidth}
        \includegraphics[width=\textwidth]{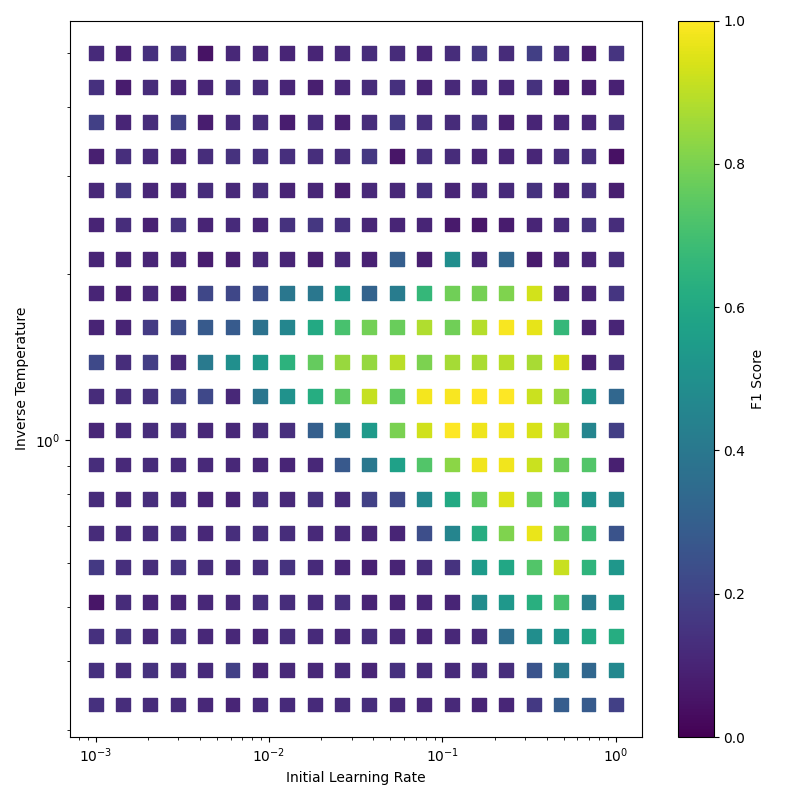}
        \caption{\(n=10\)}
    \end{subfigure}
    \begin{subfigure}[t]{0.48\textwidth}
        \includegraphics[width=\textwidth]{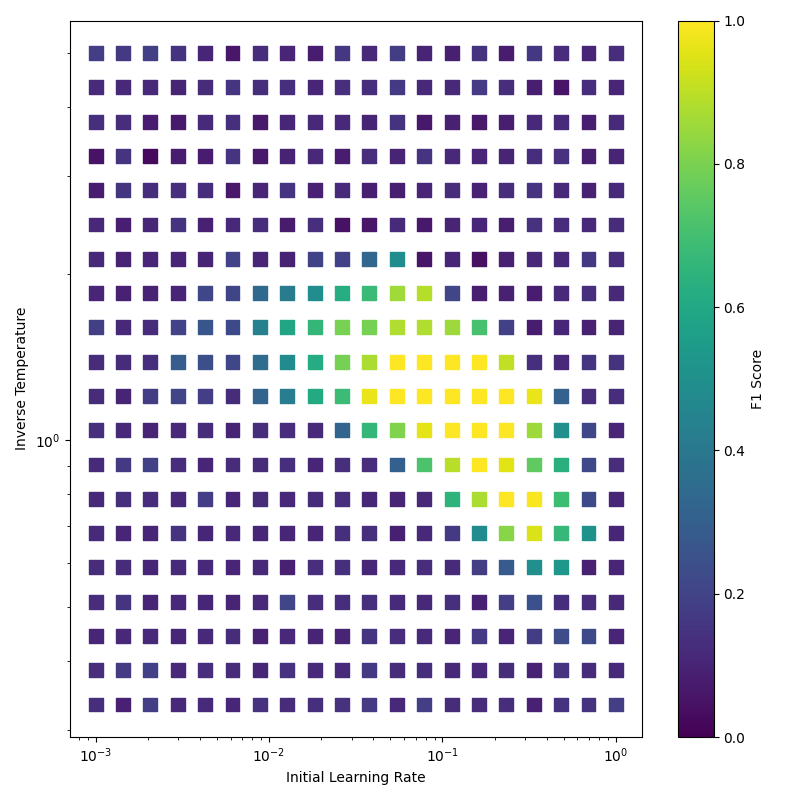}
        \caption{\(n=20\)}
    \end{subfigure}
    \caption{Hyperparameter search space for the modified network, measuring the validation F1 score on the MNIST dataset. A larger F1 score corresponds to a better performing network.}
    \label{Fig:ModifiedModelMNISTHyperparameterSearch}
\end{figure}

Figures \ref{Fig:OriginalModelMNISTHyperparameterSearch} and \ref{Fig:ModifiedModelMNISTHyperparameterSearch} show the optimal hyperparameter region for the original and modified network on classifying the MNIST dataset. Note that in these figures, we want a higher F1 score and hence the yellow region is better, unlike previous figures where we wanted a lower Euclidean distance and hence the purple region was better. In both Figure \ref{Fig:OriginalModelMNISTHyperparameterSearch} and \ref{Fig:ModifiedModelMNISTHyperparameterSearch} the initially large region for \(n=2\) shrinks slightly as \(n\) grows, and appears to shrink by proportionally the same amount in both the original and modified network. The shape of the region also remains consistent in both the modified and unmodified networks. This indicates that our modifications have preserved the stability in classification based tasks. Our modifications have, however, shifted the optimal region to \(\beta \approx 1\), meaning we have a better idea of where to search for optimal hyperparameters. While not as significant a result as in Section \ref{Section:ModifiedNetworkAutoassociativeTask} and \ref{Section:ModifiedNetworkAutoassociativeTask}, this result is still useful in working with the Dense Associative Memory as the location of the optimal hyperparameter region is consistent across different datasets and tasks.

\section{Conclusion}

In this work, we have investigated the technical details of the Dense Associative Memory and its implementation. We note that the original network specification leads to floating point imprecision and overflow when calculating intermediate values for both update and learning. We provide details on when this imprecision occurs and show the conditions are more likely when the interaction vertex is large based on the feature-to-prototype transition of the memory vectors \citep{KrotovHopfield2016}. We propose a modification to the network implementation that prevents the floating point issues. We prove our modifications do not alter the network properties, such as the capacity and autoassociative nature. Our proof relies on the interaction function being homogenous, however this property is stronger than is required, and we find empirically that some nonhomogenous functions also give well-behaved Dense Associative Memories. We then show our modified network has optimal hyperparameter regions that do not shift based on the choice of interaction vertex for purely autoassociative tasks. For classification like tasks, such as MNIST classification, our modifications do not appear to radically improve the optimal hyperparameter region but rather shift the region to a common location that makes tuning the network easier. Our modifications greatly simplify working with the Dense Associative Memory, as experiments on a dataset do not need to search across a potentially large hyperparameter space for each change in the interaction vertex. We also find several hyperparameters do not need tuning in our experiments, hinting at a potentially simpler network that is easier to tune and interpret.

\printbibliography

\appendix
\section{Full Results of Hyperparameter Searches}
\label{Section:AppendixResults}

\subsection{Original Network, Dimension 100}

These results are from the original network, have dimension 100, and train on 20 learned states.

\begin{figure}[H]
    \centering
    \begin{subfigure}[t]{0.48\textwidth}
        \includegraphics[width=\textwidth]{figures/originalModel/Scatter-interactionVertex002.png}
        \caption{Coarse search space}
    \end{subfigure}
    \begin{subfigure}[t]{0.48\textwidth}
        \includegraphics[width=\textwidth]{figures/originalModelTight/Scatter-interactionVertex002.png}
        \caption{Fine search space}
    \end{subfigure}
    \caption{Hyperparameter search space for \(n=2\)}
\end{figure}

\begin{figure}[H]
    \centering
    \begin{subfigure}[t]{0.48\textwidth}
        \includegraphics[width=\textwidth]{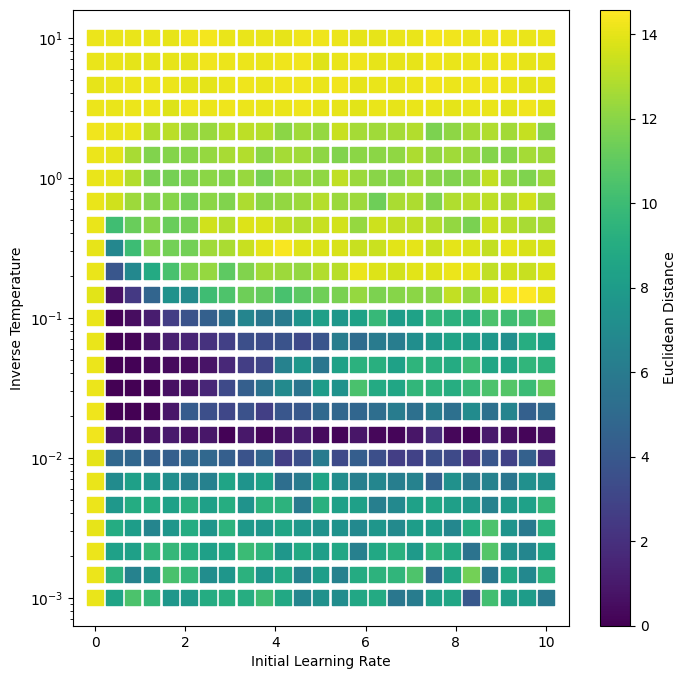}
        \caption{Coarse search space}
    \end{subfigure}
    \begin{subfigure}[t]{0.48\textwidth}
        \includegraphics[width=\textwidth]{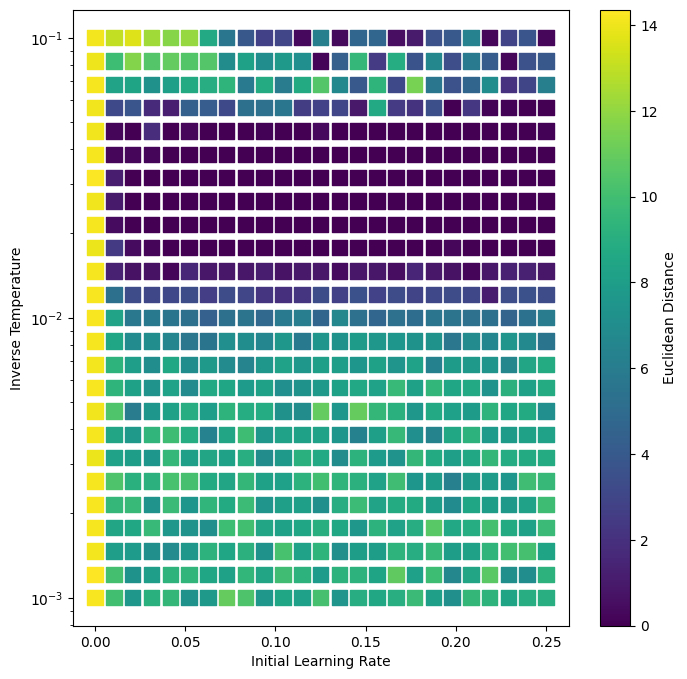}
        \caption{Fine search space}
    \end{subfigure}
    \caption{Hyperparameter search space for \(n=3\)}
\end{figure}

\begin{figure}[H]
    \centering
    \begin{subfigure}[t]{0.48\textwidth}
        \includegraphics[width=\textwidth]{figures/originalModel/Scatter-interactionVertex005.png}
        \caption{Coarse search space}
    \end{subfigure}
    \begin{subfigure}[t]{0.48\textwidth}
        \includegraphics[width=\textwidth]{figures/originalModelTight/Scatter-interactionVertex005.png}
        \caption{Fine search space}
    \end{subfigure}
    \caption{Hyperparameter search space for \(n=5\)}
\end{figure}

\begin{figure}[H]
    \centering
    \begin{subfigure}[t]{0.48\textwidth}
        \includegraphics[width=\textwidth]{figures/originalModel/Scatter-interactionVertex010.png}
        \caption{Coarse search space}
    \end{subfigure}
    \begin{subfigure}[t]{0.48\textwidth}
        \includegraphics[width=\textwidth]{figures/originalModelTight/Scatter-interactionVertex010.png}
        \caption{Fine search space}
    \end{subfigure}
    \caption{Hyperparameter search space for \(n=10\)}
\end{figure}

\begin{figure}[H]
    \centering
    \begin{subfigure}[t]{0.48\textwidth}
        \includegraphics[width=\textwidth]{figures/originalModel/Scatter-interactionVertex020.png}
        \caption{Coarse search space}
    \end{subfigure}
    \begin{subfigure}[t]{0.48\textwidth}
        \includegraphics[width=\textwidth]{figures/originalModelTight/Scatter-interactionVertex020.png}
        \caption{Fine search space}
    \end{subfigure}
    \caption{Hyperparameter search space for \(n=20\)}
\end{figure}

\begin{figure}[H]
    \centering
    \begin{subfigure}[t]{0.48\textwidth}
        \includegraphics[width=\textwidth]{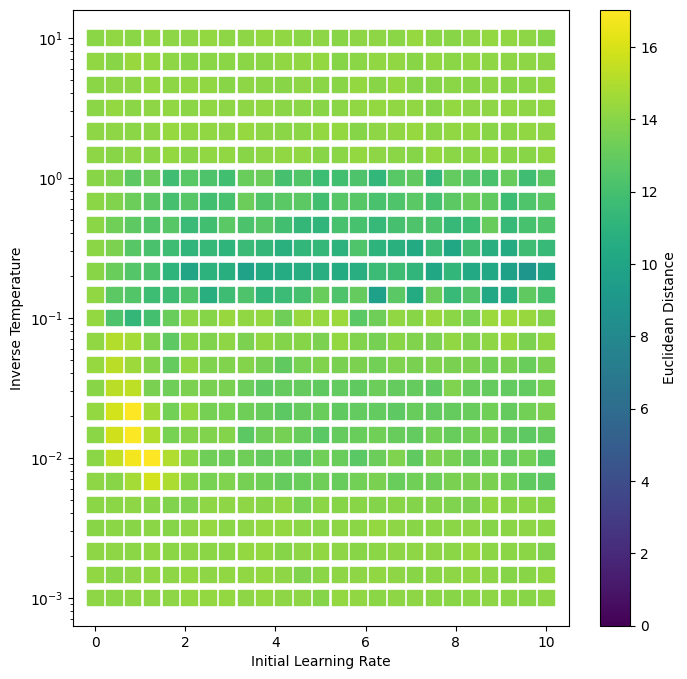}
        \caption{Coarse search space}
    \end{subfigure}
    \begin{subfigure}[t]{0.48\textwidth}
        \includegraphics[width=\textwidth]{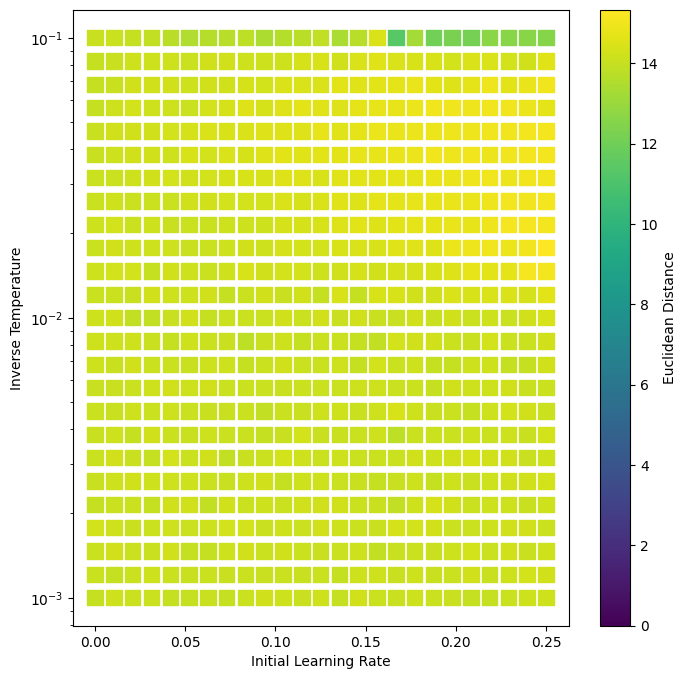}
        \caption{Fine search space}
    \end{subfigure}
    \caption{Hyperparameter search space for \(n=30\)}
\end{figure}

\subsection{Modified Network, Dimension 100}
\label{Appendix:ModifiedDim100}

These results are from our modified network, have dimension 100, and train on 20 learned states.

\begin{figure}[H]
    \centering
    \begin{subfigure}[t]{0.48\textwidth}
        \includegraphics[width=\textwidth]{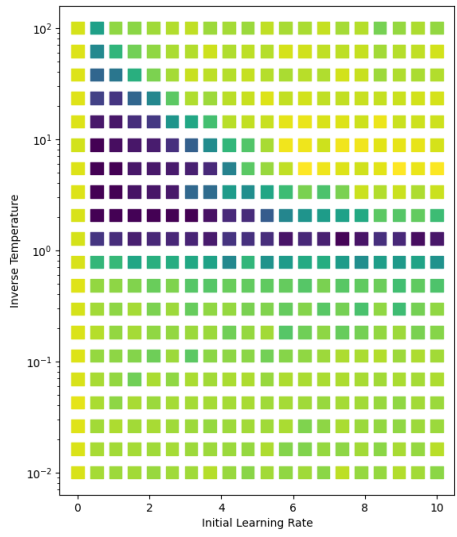}
        \caption{Coarse search space}
    \end{subfigure}
    \begin{subfigure}[t]{0.48\textwidth}
        \includegraphics[width=\textwidth]{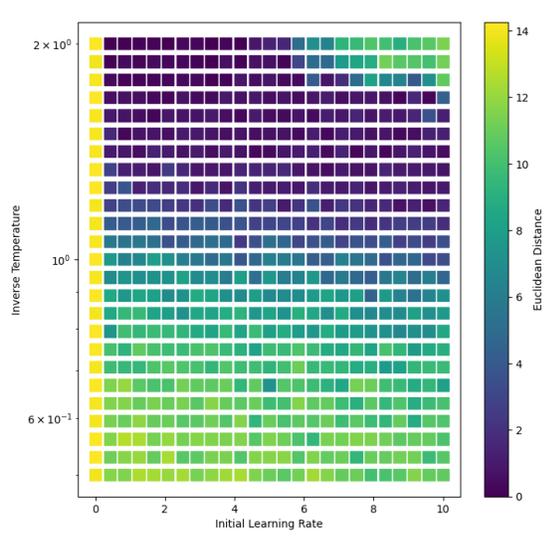}
        \caption{Fine search space}
    \end{subfigure}
    \caption{Hyperparameter search space for \(n=2\)}
\end{figure}

\begin{figure}[H]
    \centering
    \begin{subfigure}[t]{0.48\textwidth}
        \includegraphics[width=\textwidth]{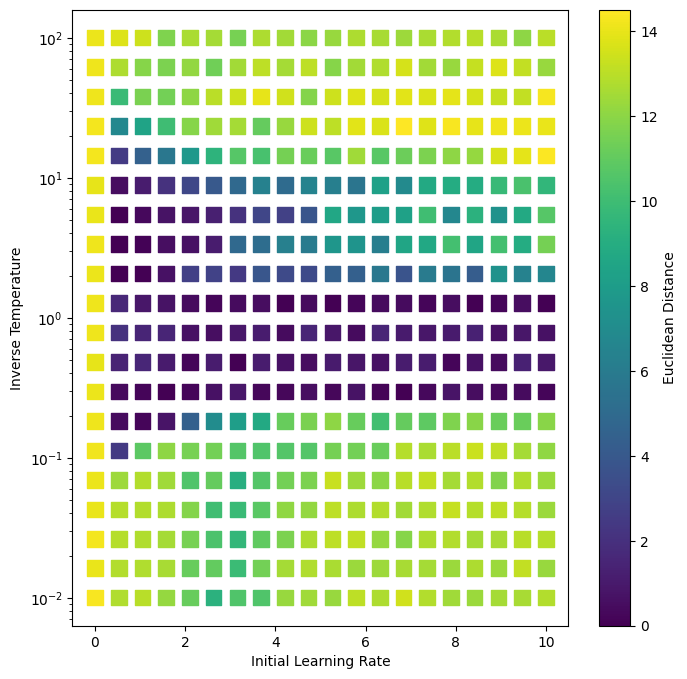}
        \caption{Coarse search space}
    \end{subfigure}
    \begin{subfigure}[t]{0.48\textwidth}
        \includegraphics[width=\textwidth]{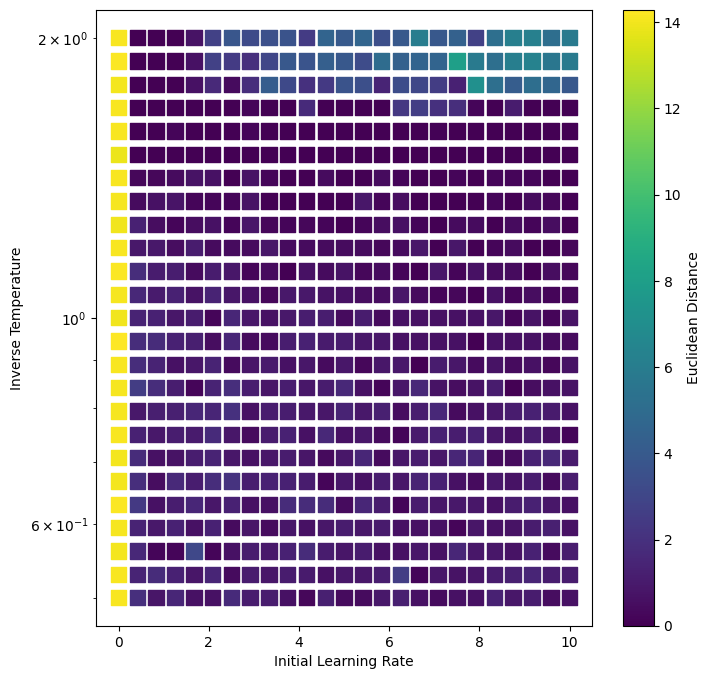}
        \caption{Fine search space}
    \end{subfigure}
    \caption{Hyperparameter search space for \(n=3\)}
\end{figure}

\begin{figure}[H]
    \centering
    \begin{subfigure}[t]{0.48\textwidth}
        \includegraphics[width=\textwidth]{figures/normalizedModel/Scatter-interactionVertex005.png}
        \caption{Coarse search space}
    \end{subfigure}
    \begin{subfigure}[t]{0.48\textwidth}
        \includegraphics[width=\textwidth]{figures/normalizedModelTight/Scatter-interactionVertex005.png}
        \caption{Fine search space}
    \end{subfigure}
    \caption{Hyperparameter search space for \(n=5\)}
\end{figure}

\begin{figure}[H]
    \centering
    \begin{subfigure}[t]{0.48\textwidth}
        \includegraphics[width=\textwidth]{figures/normalizedModel/Scatter-interactionVertex010.png}
        \caption{Coarse search space}
    \end{subfigure}
    \begin{subfigure}[t]{0.48\textwidth}
        \includegraphics[width=\textwidth]{figures/normalizedModelTight/Scatter-interactionVertex010.png}
        \caption{Fine search space}
    \end{subfigure}
    \caption{Hyperparameter search space for \(n=10\)}
\end{figure}

\begin{figure}[H]
    \centering
    \begin{subfigure}[t]{0.48\textwidth}
        \includegraphics[width=\textwidth]{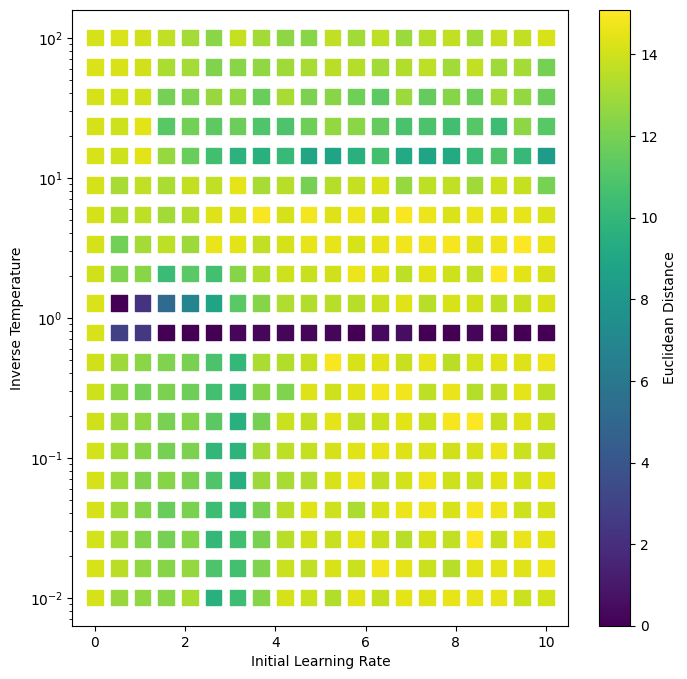}
        \caption{Coarse search space}
    \end{subfigure}
    \begin{subfigure}[t]{0.48\textwidth}
        \includegraphics[width=\textwidth]{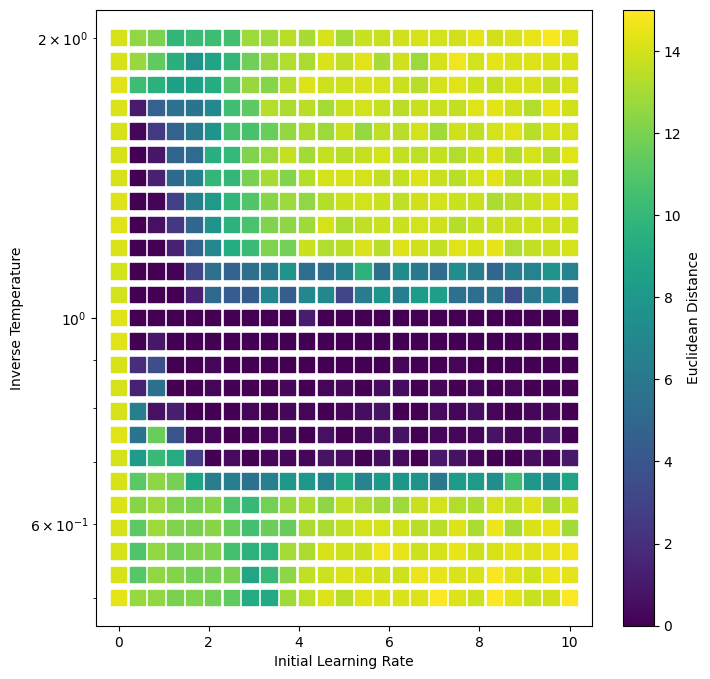}
        \caption{Fine search space}
    \end{subfigure}
    \caption{Hyperparameter search space for \(n=15\)}
\end{figure}

\begin{figure}[H]
    \centering
    \begin{subfigure}[t]{0.48\textwidth}
        \includegraphics[width=\textwidth]{figures/normalizedModel/Scatter-interactionVertex020.png}
        \caption{Coarse search space}
    \end{subfigure}
    \begin{subfigure}[t]{0.48\textwidth}
        \includegraphics[width=\textwidth]{figures/normalizedModelTight/Scatter-interactionVertex020.png}
        \caption{Fine search space}
    \end{subfigure}
    \caption{Hyperparameter search space for \(n=20\)}
\end{figure}

\begin{figure}[H]
    \centering
    \begin{subfigure}[t]{0.48\textwidth}
        \includegraphics[width=\textwidth]{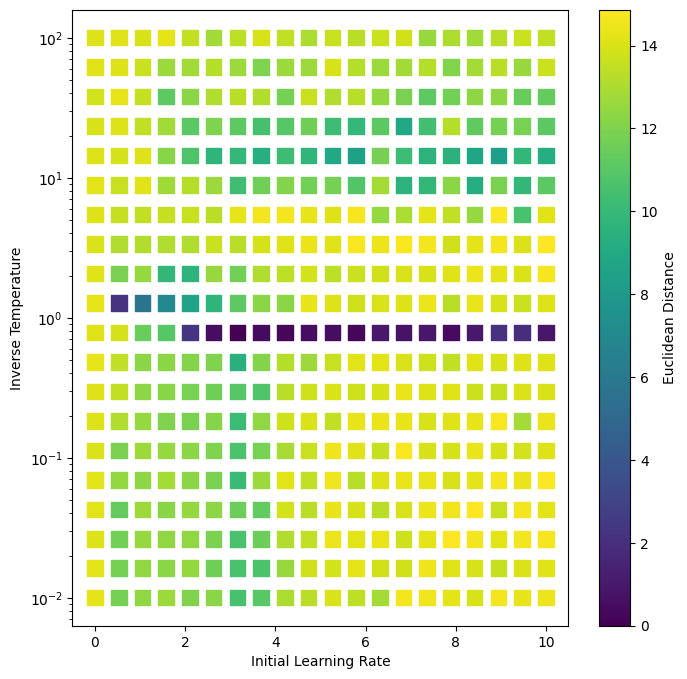}
        \caption{Coarse search space}
    \end{subfigure}
    \begin{subfigure}[t]{0.48\textwidth}
        \includegraphics[width=\textwidth]{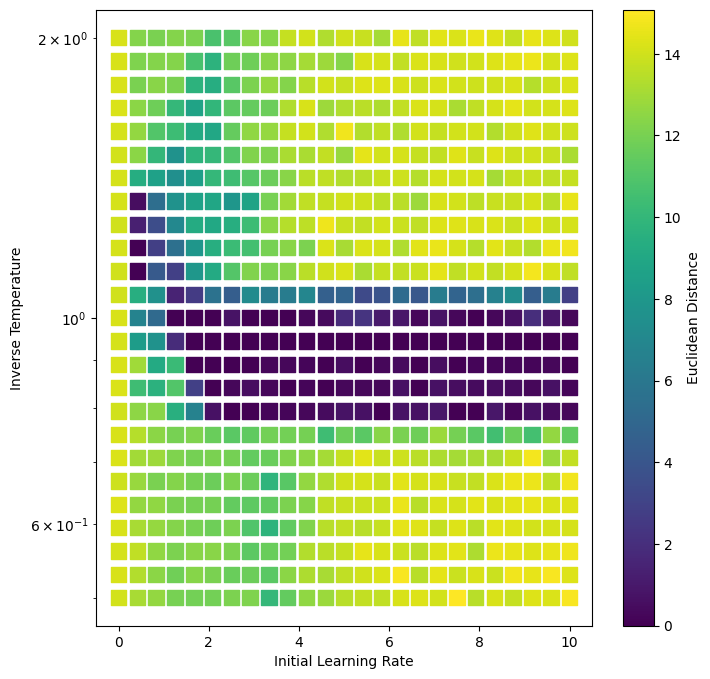}
        \caption{Fine search space}
    \end{subfigure}
    \caption{Hyperparameter search space for \(n=25\)}
\end{figure}

\begin{figure}[H]
    \centering
    \begin{subfigure}[t]{0.48\textwidth}
        \includegraphics[width=\textwidth]{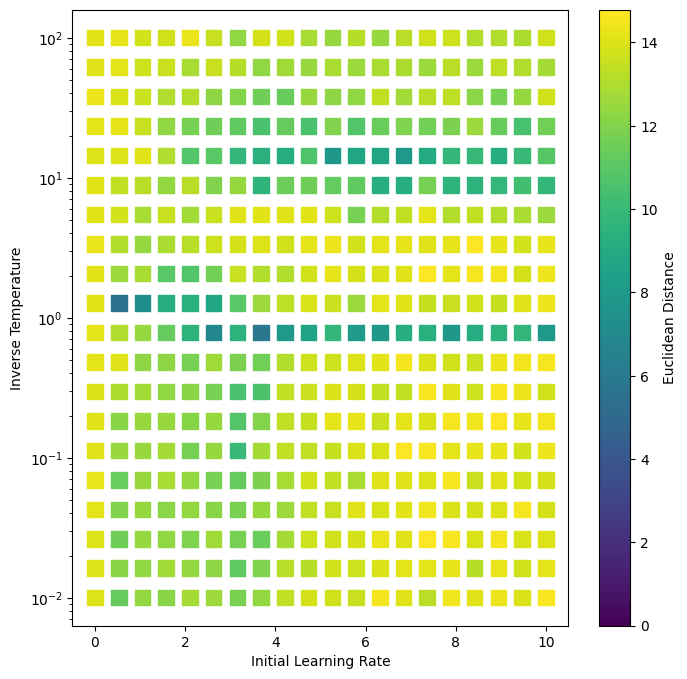}
        \caption{Coarse search space}
    \end{subfigure}
    \begin{subfigure}[t]{0.48\textwidth}
        \includegraphics[width=\textwidth]{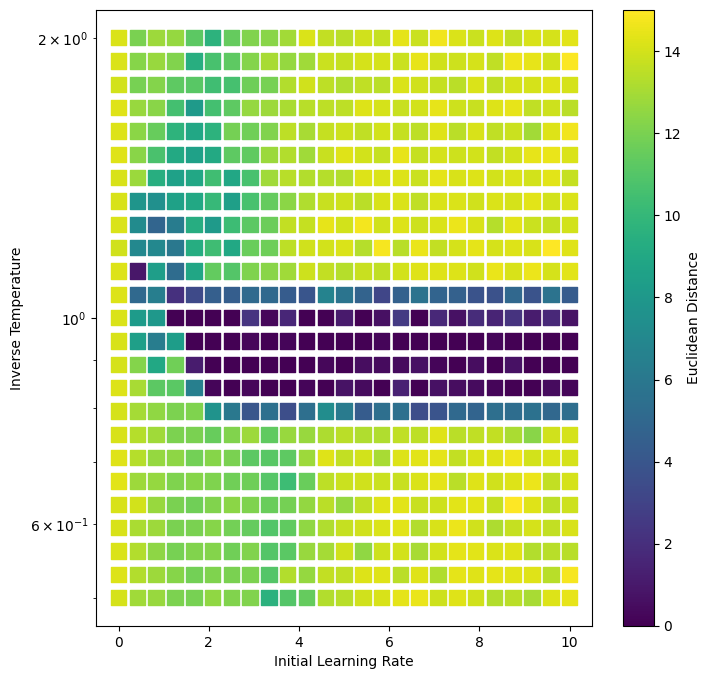}
        \caption{Fine search space}
    \end{subfigure}
    \caption{Hyperparameter search space for \(n=30\)}
\end{figure}

\subsection{Modified Network, Dimension 100, Large \(n\)}
\label{Appendix:ModifiedDim100LargeN}

These results continue with the same network and setup from Appendix \ref{Appendix:ModifiedDim100} but with much larger interaction vertices than were possible with the original network. We also present only the tight grid search results, as the coarse grid search did not capture the optimal region well.

\begin{figure}[H]
    \centering
    \includegraphics[width=0.48\textwidth]{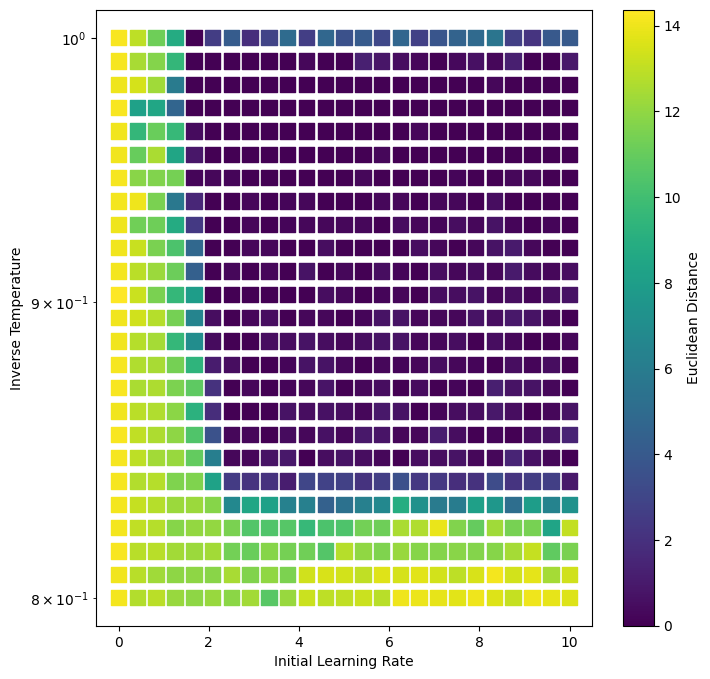}
    \caption{Hyperparameter search space for \(n=40\)}
\end{figure}

\begin{figure}[H]
    \centering
    \includegraphics[width=0.48\textwidth]{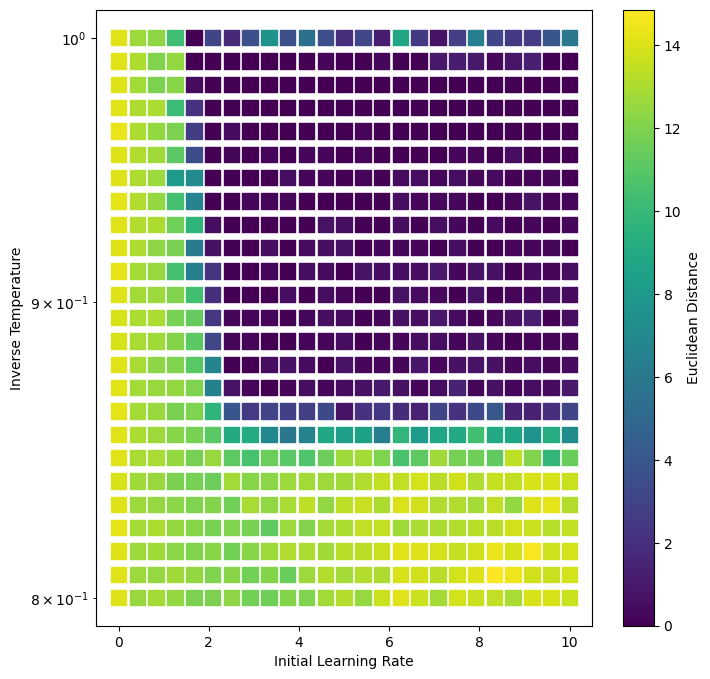}
    \caption{Hyperparameter search space for \(n=50\)}
\end{figure}

\begin{figure}[H]
    \centering
    \includegraphics[width=0.48\textwidth]{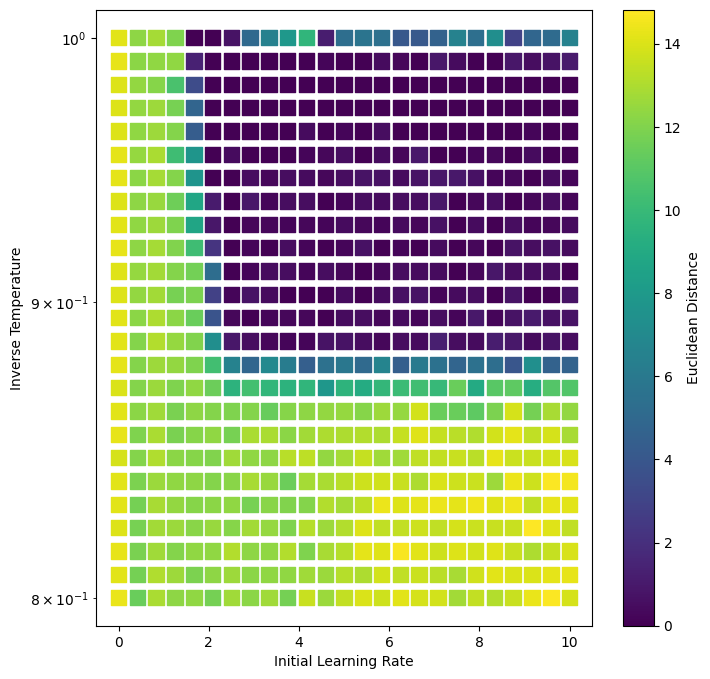}
    \caption{Hyperparameter search space for \(n=60\)}
\end{figure}

\begin{figure}[H]
    \centering
    \includegraphics[width=0.48\textwidth]{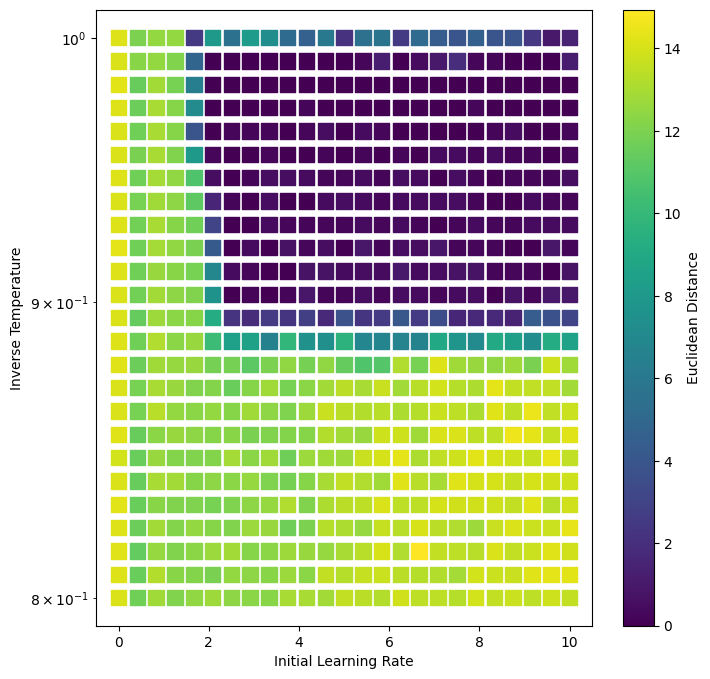}
    \caption{Hyperparameter search space for \(n=70\)}
\end{figure}

\begin{figure}[H]
    \centering
    \includegraphics[width=0.48\textwidth]{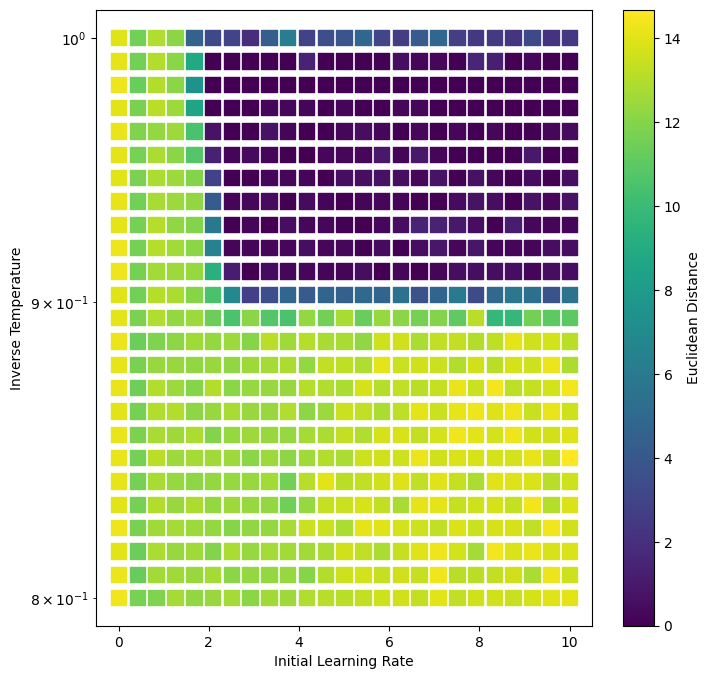}
    \caption{Hyperparameter search space for \(n=80\)}
\end{figure}

\begin{figure}[H]
    \centering
    \includegraphics[width=0.48\textwidth]{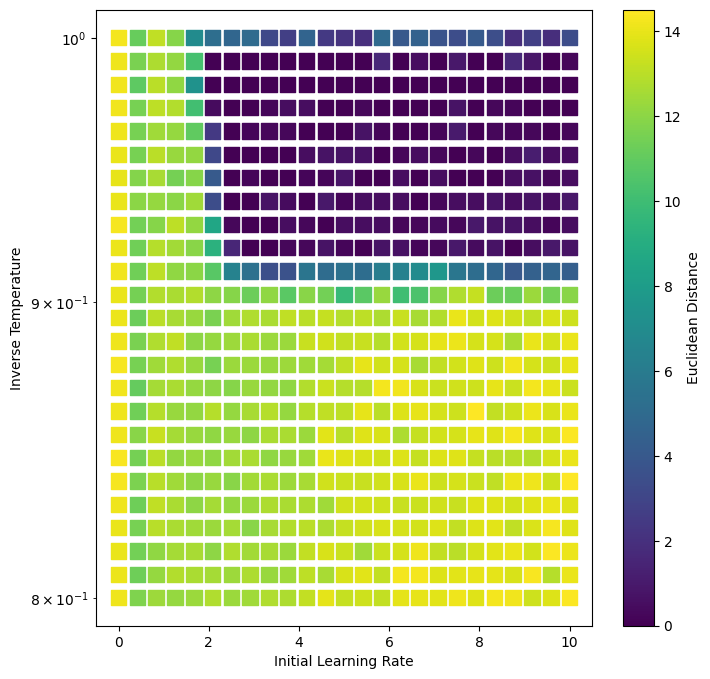}
    \caption{Hyperparameter search space for \(n=90\)}
\end{figure}

\begin{figure}[H]
    \centering
    \includegraphics[width=0.48\textwidth]{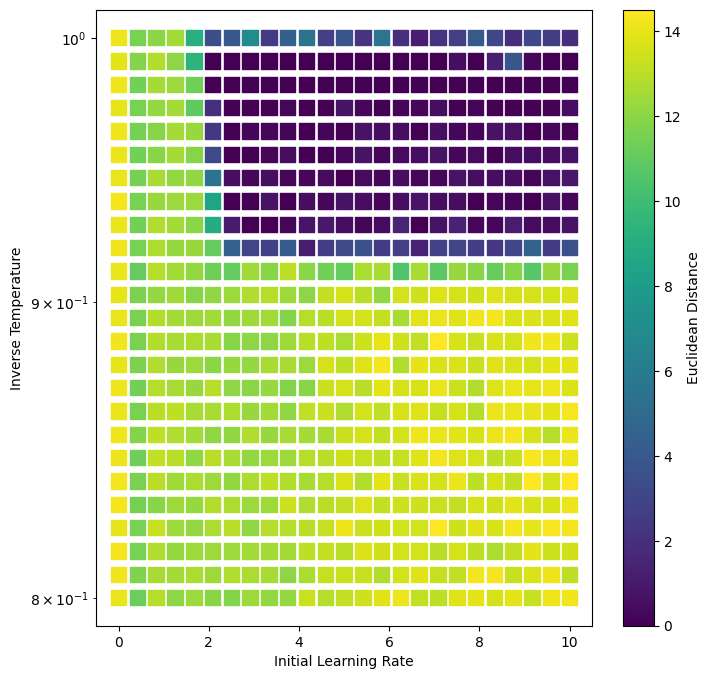}
    \caption{Hyperparameter search space for \(n=100\)}
\end{figure}

\subsection{Modified Network, Dimension 250}

These results are from our modified network, have dimension 250, and train on 30 learned states.

\begin{figure}[H]
    \centering
    \begin{subfigure}[t]{0.48\textwidth}
        \includegraphics[width=\textwidth]{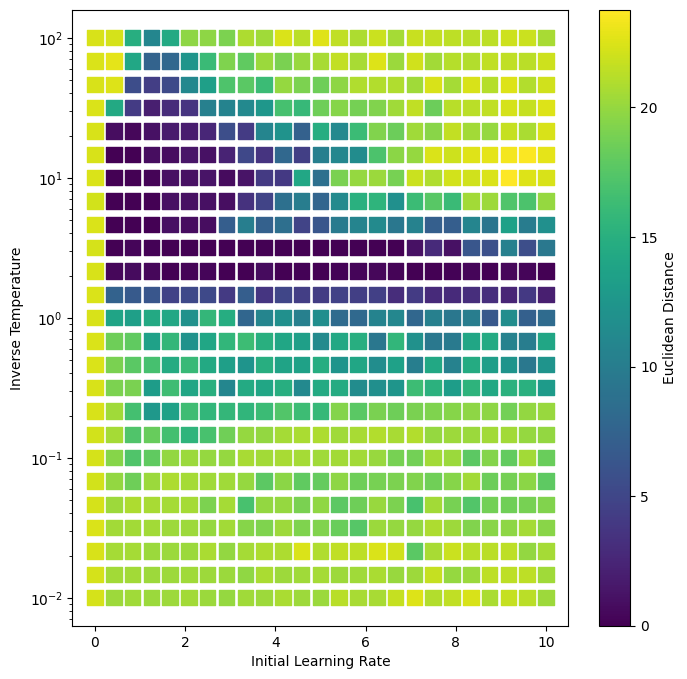}
        \caption{Coarse search space}
    \end{subfigure}
    \begin{subfigure}[t]{0.48\textwidth}
        \includegraphics[width=\textwidth]{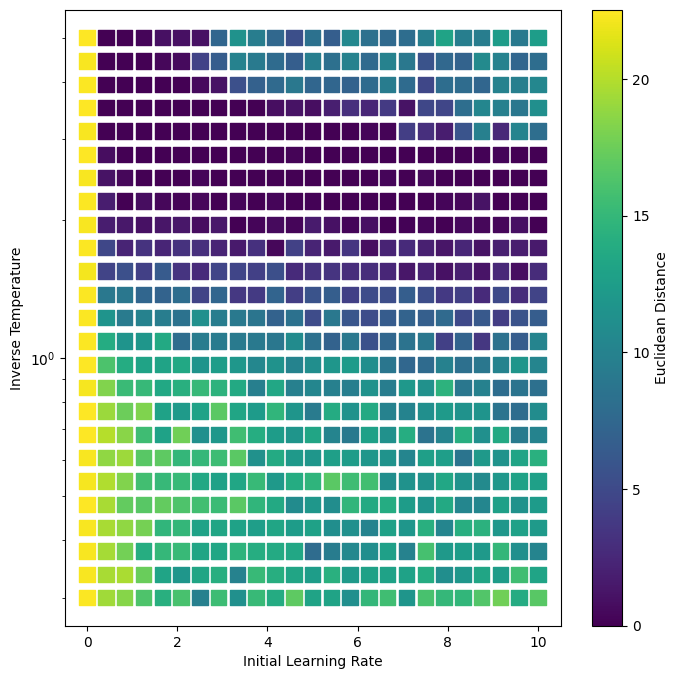}
        \caption{Fine search space}
    \end{subfigure}
    \caption{Hyperparameter search space for \(n=2\)}
\end{figure}

\begin{figure}[H]
    \centering
    \begin{subfigure}[t]{0.48\textwidth}
        \includegraphics[width=\textwidth]{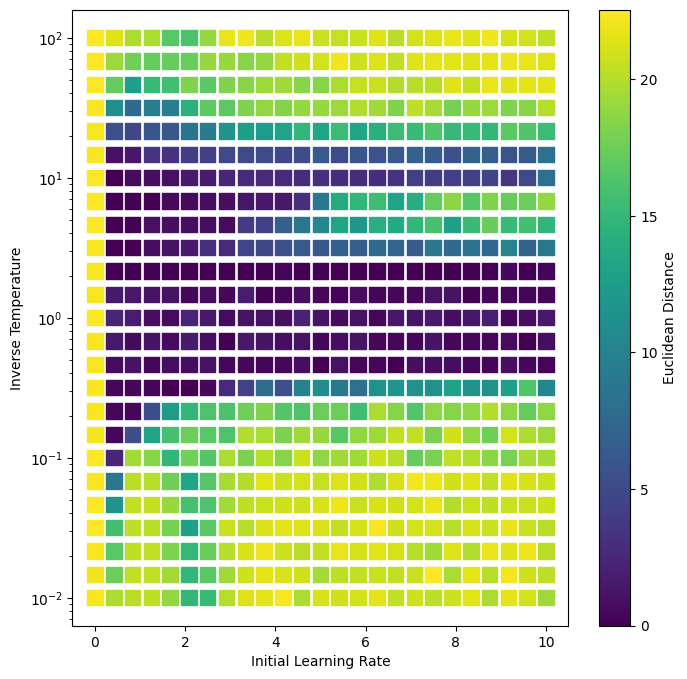}
        \caption{Coarse search space}
    \end{subfigure}
    \begin{subfigure}[t]{0.48\textwidth}
        \includegraphics[width=\textwidth]{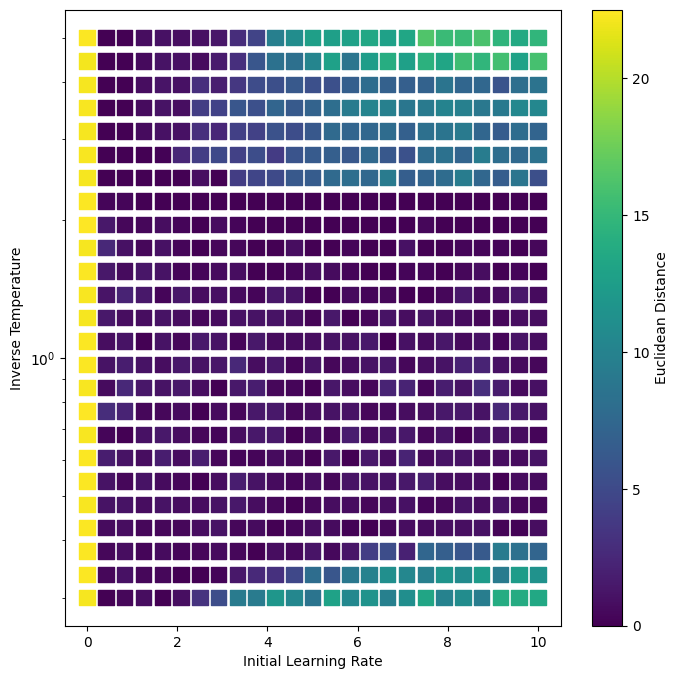}
        \caption{Fine search space}
    \end{subfigure}
    \caption{Hyperparameter search space for \(n=3\)}
\end{figure}

\begin{figure}[H]
    \centering
    \begin{subfigure}[t]{0.48\textwidth}
        \includegraphics[width=\textwidth]{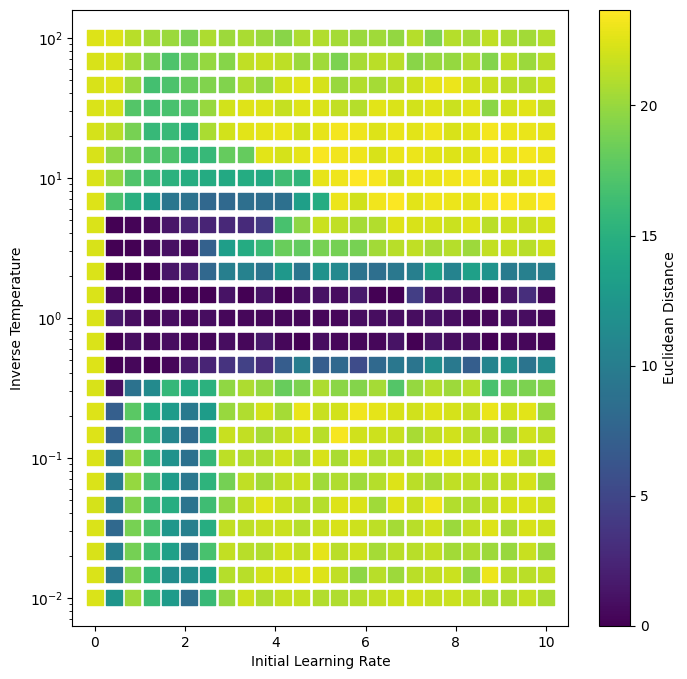}
        \caption{Coarse search space}
    \end{subfigure}
    \begin{subfigure}[t]{0.48\textwidth}
        \includegraphics[width=\textwidth]{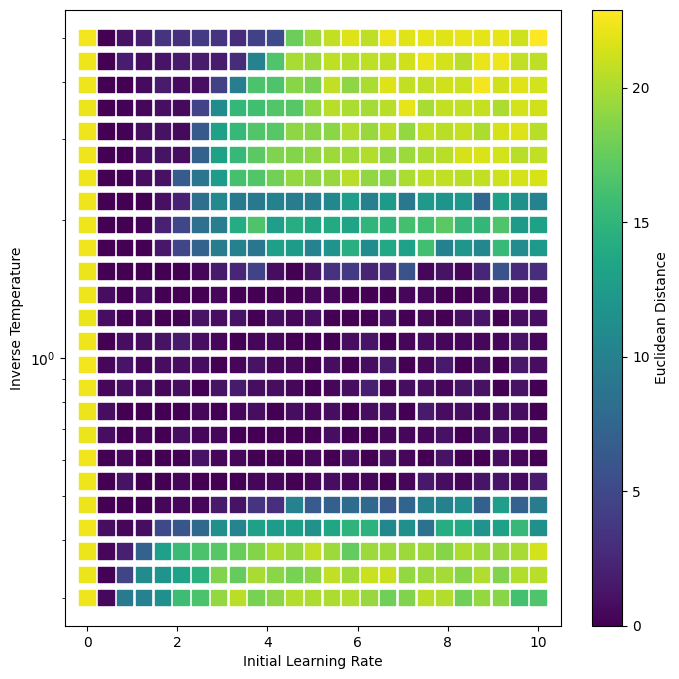}
        \caption{Fine search space}
    \end{subfigure}
    \caption{Hyperparameter search space for \(n=5\)}
\end{figure}

\begin{figure}[H]
    \centering
    \begin{subfigure}[t]{0.48\textwidth}
        \includegraphics[width=\textwidth]{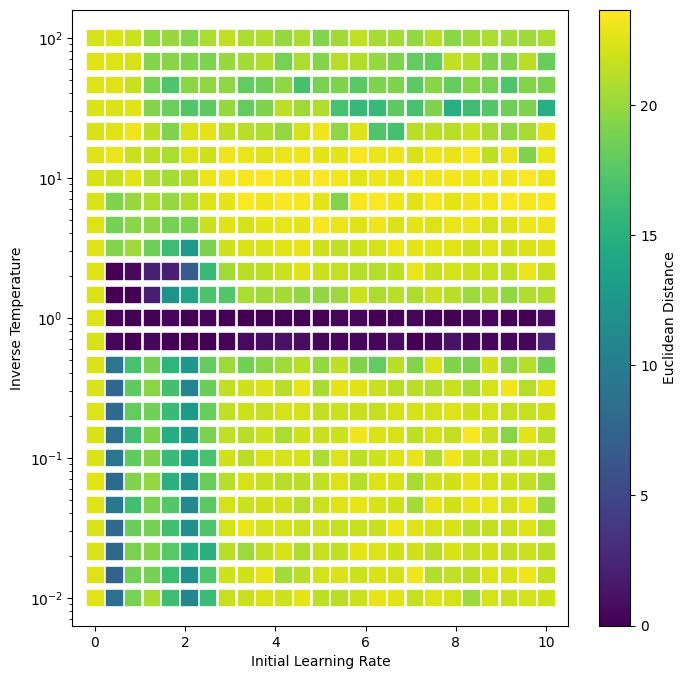}
        \caption{Coarse search space}
    \end{subfigure}
    \begin{subfigure}[t]{0.48\textwidth}
        \includegraphics[width=\textwidth]{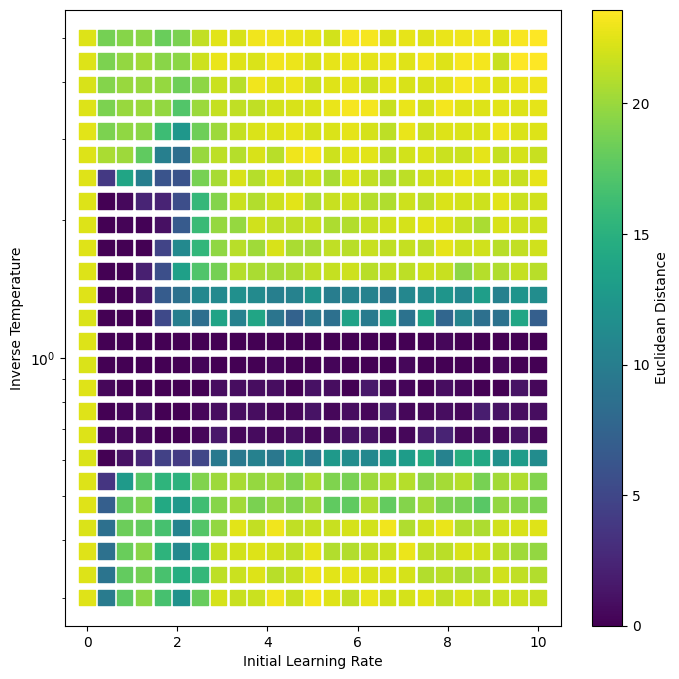}
        \caption{Fine search space}
    \end{subfigure}
    \caption{Hyperparameter search space for \(n=10\)}
\end{figure}

\begin{figure}[H]
    \centering
    \begin{subfigure}[t]{0.48\textwidth}
        \includegraphics[width=\textwidth]{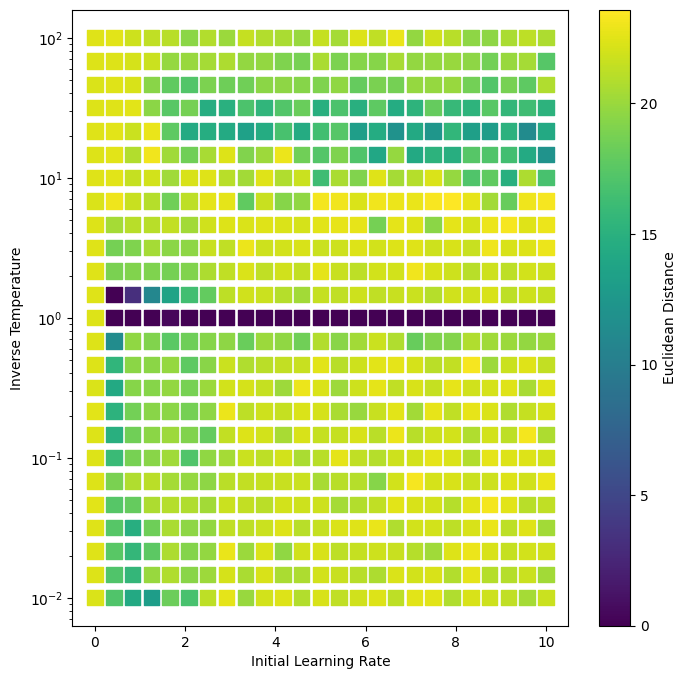}
        \caption{Coarse search space}
    \end{subfigure}
    \begin{subfigure}[t]{0.48\textwidth}
        \includegraphics[width=\textwidth]{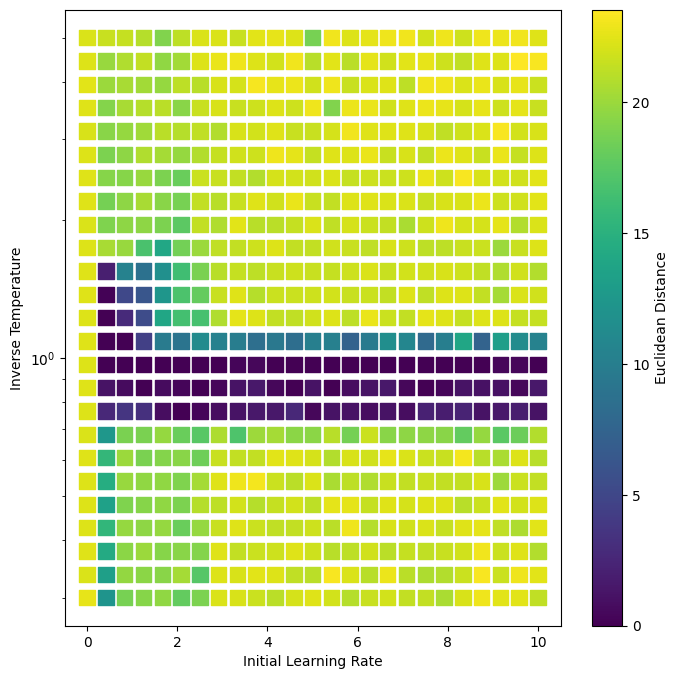}
        \caption{Fine search space}
    \end{subfigure}
    \caption{Hyperparameter search space for \(n=20\)}
\end{figure}

\begin{figure}[H]
    \centering
    \begin{subfigure}[t]{0.48\textwidth}
        \includegraphics[width=\textwidth]{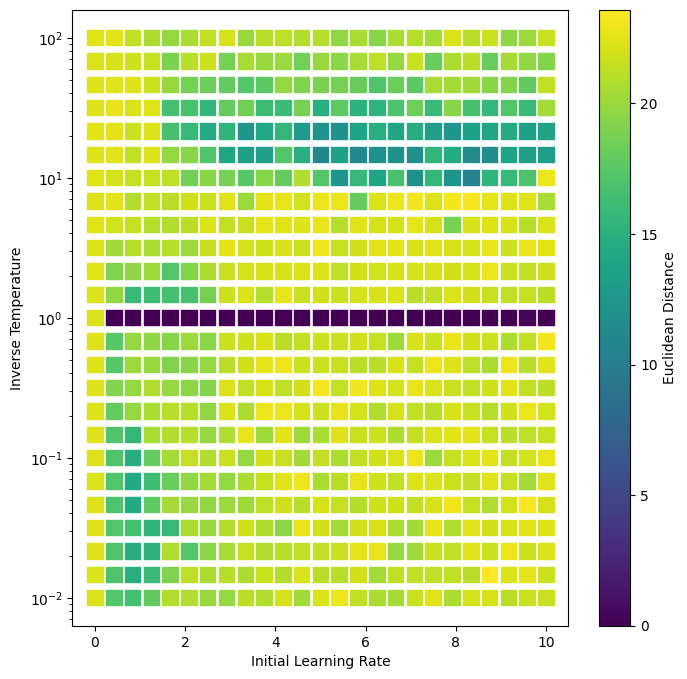}
        \caption{Coarse search space}
    \end{subfigure}
    \begin{subfigure}[t]{0.48\textwidth}
        \includegraphics[width=\textwidth]{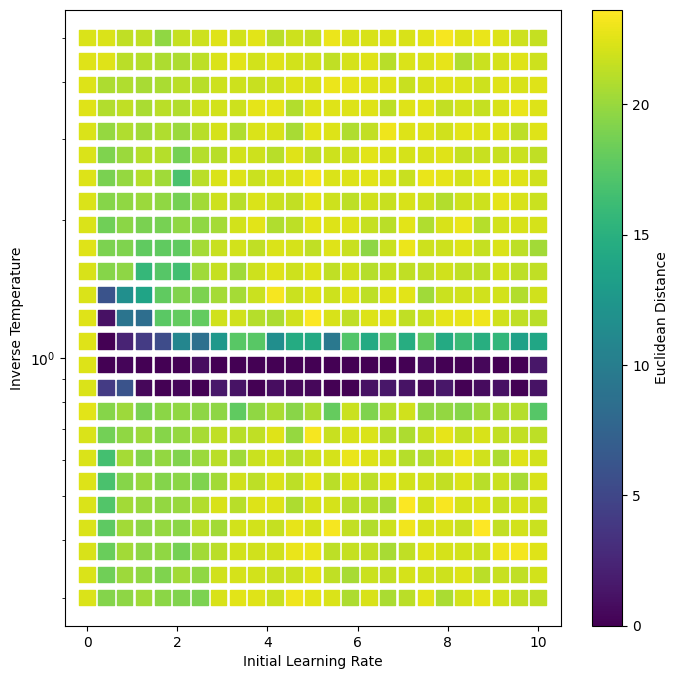}
        \caption{Fine search space}
    \end{subfigure}
    \caption{Hyperparameter search space for \(n=30\)}
\end{figure}

\end{document}